\newif\ifarxiv
\declaretheoremstyle[bodyfont=\itshape,notefont=\bfseries]{thmbf}
\declaretheoremstyle[notefont=\bfseries]{defibf}
\declaretheoremstyle[headfont=\normalfont\itshape,qed=$\square$]{proofita}
\declaretheorem[style=thmbf,numberwithin=section,name={Theorem}]{theorem}
\declaretheorem[style=thmbf,numberlike=theorem,name={Proposition}]{proposition}
\declaretheorem[style=thmbf,numberlike=theorem,name={Algorithm}]{algorithmm}
\declaretheorem[style=thmbf,numberlike=theorem,name={Corollary}]{corollary}
\declaretheorem[style=thmbf,numbered=no,name={Corollary}]{corollary*}
\declaretheorem[style=defibf,numberlike=theorem,name={Example}]{example}
\declaretheorem[style=defibf,numberlike=theorem,name={Definition}]{definition}
\declaretheorem[style=defibf,numbered=no,name={Remark}]{remark}
\let\epsilon\varepsilon
\let\subset\subseteq
\DeclarePairedDelimiter\abs{\lvert}{\rvert}     
\DeclarePairedDelimiter\pa{\lparen}{\rparen}    
\DeclarePairedDelimiter\norm{\lVert}{\rVert}    
\NewDocumentCommand{\infnorm}{ s O{} m }{%
  \IfBooleanTF{#1}{\norm*{#3}}{\norm[#2]{#3}}_{\infty}%
}
\NewDocumentCommand{\twonorm}{ s O{} m }{%
  \IfBooleanTF{#1}{\norm*{#3}}{\norm[#2]{#3}}_2%
}
\NewDocumentCommand{\tvnorm}{ s O{} m }{%
  \IfBooleanTF{#1}{\norm*{#3}}{\norm[#2]{#3}}_{\textup{TV}}%
}
\NewDocumentCommand{\onenorm}{ s O{} m }{%
  \IfBooleanTF{#1}{\norm*{#3}}{\norm[#2]{#3}}_1%
}
\NewDocumentCommand{\frobnorm}{ s O{} m }{%
  \IfBooleanTF{#1}{\norm*{#3}}{\norm[#2]{#3}}_F%
}
\NewDocumentCommand{\scalar}{s O{} >{\SplitArgument{1}{,}}m}{%
    \IfBooleanTF{#1}{\scalaraux*#3}{\scalaraux[#2]#3}%
}
\DeclarePairedDelimiterX{\scalaraux}[2]{\langle}{\rangle}{#1, #2}
\newcommand*\circledaux[1]{\tikz[baseline=(char.base)]{
    \node[shape=circle,draw,inner sep=0.8pt] (char) {#1};}}
\NewDocumentCommand{\circled}{ m o }{%
    \IfNoValueTF{#2}{ \circledaux{#1} }{ \stackrel{\circledaux{#1}}{#2} }%
}
\newcommand*\nn{\textsc{nn}}
\newcommand*\rnn{\textsc{rnn}}
\newcommand*\lstm{\textsc{lstm}}
\newcommand*\gru{\textsc{gru}}
\newcommand*\qr{\textsc{qr}}
\newcommand*\svd{\textsc{svd}}
\newcommand*\rgd{\textsc{rgd}}
\newcommand*\sgd{\textsc{sgd}}
\newcommand*\vae{\textsc{vae}}
\newcommand*\adam{\textsc{adam}}
\newcommand*\adagrad{\textsc{adagrad}}
\newcommand*\rmsprop{\textsc{rmsprop}}
\newcommand*\scornn{\textsc{scornn}}
\newcommand*\scurnn{\textsc{scurnn}}
\newcommand*\exprnn{\textsc{exprnn}}
\newcommand*\dtriv{\textsc{dtriv}}
\newcommand*\mnist{\textsc{mnist}}
\newcommand*\pmnist{\textsc{p-mnist}}
\newcommand*\timit{\textsc{timit}}
\newcommand*\mse{\textsc{mse}}
\DeclareMathOperator*{\cay}{cay}                        
\renewcommand*{\Im}{\mathrm{Im}}                        
\newcommand*\dif{\mathrm{d}}                            
\newcommand*\Id{\mathrm{Id}}                            
\newcommand*\grad{\nabla}                               
\newcommand\conn{\nabla}                                
\newcommand*\defi{\coloneqq}                            
\newcommand*\iso{\cong}                                 
\newcommand*\fn{\operatorname}                          
\newcommand*\Unif{\mathcal{U}}                          
\newcommand{\code}{\texttt}                             
\newcommand*\gm{\textsl{g}}                             
\newcommand*\CC{\mathbb{C}}                              
\newcommand*\RR{\mathbb{R}}                              
\let\epsilon\varepsilon
\let\subset\subseteq
\newcommand*\deffun[1]{\dodeffunction#1\relax}
\def\dodeffunction#1:#2->#3;#4\relax
\relax\end{aligned}}}
\def\dodeffunctionaux#1->#2\relax{#1&\mapsto#2}
\DeclarePairedDelimiterX\set[1]{\lbrace}{\rbrace}
  {\mathcode`\|="8000 \def|{\:\delimsize\vert\:}#1}
\newcommand\transaux{\intercal}                             
\newcommand\trans[1]{#1^\transaux}                          
\newcommand\conj[1]{#1^\ast}                                
\DeclareMathOperator\tr{tr}                                 
\newcommand\dual[1]{#1^\ast}                                
\newcommand\MM{\mathcal{M}}                                 
\newcommand\NN{\mathcal{N}}                                 
\DeclareMathOperator\ad{ad}                                 
\DeclareMathOperator\Ad{Ad}                                 
\newcommand\g{\mathfrak{g}}                                 
\newcommand\I{\mathrm{I}}                                   
\NewDocumentCommand{\enorm}{ s O{} m }{%
    \IfBooleanTF{#1}{\norm*{#3}}{\norm[#2]{#3}}_{\E}%
}
\NewDocumentCommand{\denorm}{ s O{} m }{%
    \dual{\IfBooleanTF{#1}{\enorm*{#3}}{\enorm[#2]{#3}}}%
}
\DeclareMathOperator{\Endaux}{End}                       
\DeclareMathOperator{\Uaux}{U}                           
\DeclareMathOperator{\ualgaux}{\mathfrak{u}}                
\DeclareMathOperator{\SLaux}{SL}                         
\DeclareMathOperator{\slaux}{\mathfrak{sl}}              
\DeclareMathOperator{\GLaux}{GL}                         
\DeclareMathOperator{\GLpaux}{GL^{+}}                    
\DeclareMathOperator{\RPaux}{\mathbb{RP}}                
\DeclareMathOperator{\CPaux}{\mathbb{CP}}                
\DeclareMathOperator{\glaux}{\mathfrak{gl}}              
\DeclareMathOperator{\Oaux}{O}                           
\DeclareMathOperator{\SOaux}{SO}                         
\DeclareMathOperator{\soaux}{\mathfrak{so}}              
\DeclareMathOperator{\symaux}{\mathfrak{sym}}            
\DeclareMathOperator{\SUaux}{SU}                         
\DeclareMathOperator{\suaux}{\mathfrak{su}}              
\DeclareMathOperator{\Spaux}{Sp}                         
\DeclareMathOperator{\Symaux}{Sym}                       
\DeclareMathOperator{\Skewaux}{Skew}                     
\DeclareMathOperator{\Staux}{St}                         
\DeclareMathOperator{\Haux}{\mathbb{H}}                  
\DeclareMathOperator{\Diagaux}{Diag}                     
\DeclareMathOperator{\toraux}{\mathbb{T}}                
\DeclareMathOperator{\toralg}{\mathfrak{t}}              
\NewDocumentCommand{\U}{ m }{ \Uaux\pa{#1} }
\NewDocumentCommand{\ualg}{ m }{ \ualgaux\pa{#1} }
\NewDocumentCommand{\SL}{ m }{ \SLaux\pa{#1} }
\NewDocumentCommand{\slalg}{ m }{ \slaux\pa{#1} }
\NewDocumentCommand{\End}{ m }{ \Endaux\pa{#1} }
\NewDocumentCommand{\GL}{ m }{ \GLaux\pa{#1} }
\NewDocumentCommand{\GLp}{ m }{ \GLpaux\pa{#1} }
\NewDocumentCommand{\RP}{ m }{ \RPaux^{#1} }
\NewDocumentCommand{\CP}{ m }{ \CPaux^{#1} }
\NewDocumentCommand{\gl}{ m }{ \glaux\pa{#1} }
\NewDocumentCommand{\Sp}{ m }{ \Spaux\pa{#1} }
\NewDocumentCommand{\Ort}{ m }{ \Oaux\pa{#1} }
\NewDocumentCommand{\Hip}{ m }{ \Haux^{#1} }
\NewDocumentCommand{\SO}{ m }{ \SOaux\pa{#1} }
\NewDocumentCommand{\SU}{ m }{ \SUaux\pa{#1} }
\NewDocumentCommand{\su}{ m }{ \suaux\pa{#1} }
\NewDocumentCommand{\so}{ m }{ \soaux\pa{#1} }
\NewDocumentCommand{\sym}{ m }{ \symaux\pa{#1} }
\NewDocumentCommand{\Skew}{ m }{ \Skewaux\pa{#1} }
\NewDocumentCommand{\Symp}{ m }{ \Symaux^+\pa{#1} }
\NewDocumentCommand{\TT}{ m }{ \toraux\pa{#1} }
\NewDocumentCommand{\ttalg}{ m }{ \toralg\pa{#1} }
\NewDocumentCommand{\M}{ >{\SplitArgument{1}{,}}m}{%
    \RR^{\prodaux #1}%
}
\NewDocumentCommand{\St}{ >{\SplitArgument{1}{,}}m}{%
    \Staux\pa{\commasaux #1}%
}
\NewDocumentCommand{\Diag}{ >{\SplitArgument{1}{,}}m}{%
    \Diagaux\pa{\commasaux #1}%
}
\NewDocumentCommand{\commasaux}{ m m }{%
    \IfNoValueTF{#2}{ #1 }{ #1, #2 }%
}
\NewDocumentCommand{\prodaux}{ m m }{%
    \IfNoValueTF{#2}{ #1 \times #1 }{ #1 \times #2 }%
}
\renewcommand\paragraph{\@startsection{paragraph}{4}{\z@}%
                                    {0ex \@plus0.5ex \@minus.2ex}%
                                    {-1em}%
                                    {\normalfont\normalsize\bfseries}}
\title{Trivializations for Gradient-Based \\
Optimization on Manifolds}
\author{%
    Mario Lezcano-Casado\\
  Department of Mathematics\\
  University of Oxford\\
  Oxford, \\
  \texttt{mario.lezcanocasado@maths.ox.ac.uk} \\
}
\begin{document}

\maketitle

\begin{abstract}
We introduce a framework to study the transformation of problems with manifold constraints into unconstrained problems through parametrizations in terms of a Euclidean space.
We call these parametrizations \emph{trivializations}.
We prove conditions under which a trivialization is sound in the context of gradient-based optimization and we show how two large families of trivializations have overall favorable properties, but also suffer from a performance issue.
We then introduce \emph{dynamic trivializations}, which solve this problem, and we show how these form a family of optimization methods that lie between trivializations and Riemannian gradient descent, and combine the benefits of both of them.
We then show how to implement these two families of trivializations in practice for different matrix manifolds. To this end, we prove a formula for the gradient of the exponential of matrices, which can be of practical interest on its own.
Finally, we show how dynamic trivializations improve the performance of existing methods on standard tasks designed to test long-term memory within neural networks.\footnote{An implementation can be found at: \url{https://github.com/Lezcano/expRNN}}

\end{abstract}

\section{Introduction}
Constrained optimization allows to put restrictions on the family of objects being optimized. When the restrictions are simple, for example, having a vector with entries in $[0,1]$ or $[-1,1]$, simple element-wise parametrizations using sigmoid functions or $\tanh$ allow the design of powerful models such as \lstm~\citep{hochreiter1997long} and \gru~\citep{cho2014learning} through the method of \emph{gating}. This kind of vector-regularization is now standard, and most of the advanced neural network architectures use it as a basic building block~\citep{bahdanau2014neural}. Constraints on matrices, on the other hand, are much more challenging.

Most of the interesting sets of matrices turn out to have a manifold structure. Optimization on manifolds is both theoretically and practically challenging due to the inherent complexity of the objects involved. Even then, optimization on matrix manifolds has proven to be rather useful in many different subfields of machine learning and neural networks (\nn). Examples of interesting matrix manifolds in the context of gradient-based optimization are the set of positive definite matrices in Bayesian statistics~\citep{rasmussen2005gaussian}, orthogonal matrices within \rnn s~\citep{arjovsky2016unitary,helfrich18a,lezcano2019cheap}, \nn s with structured linear layers via the \qr{} or the \svd{} decomposition~\citep{berg2018sylvester,zhang2018stabilizing,kingma2018glow}, or invertible matrices in normalizing flows~\citep{berg2018sylvester} and \vae s~\citep{tomczak2016improving}.

In this paper we aim to provide a theoretically sound but also efficiently implementable framework to perform optimization on these and other matrix manifolds in the context of gradient-based optimization.

\paragraph{Outline of the paper and summary of the main contributions}\mbox{}\\
In this paper, we study parametrizations of the form $\deffun{\phi : \RR^n -> \MM;}$.

We consider the transformation of a constrained optimization problem into an unconstrained one.
\[
\text{Initial problem: }
\min_{x \in \MM} f(x) \qquad \quad
\text{Unconstrained problem: }
    \min_{y \in \RR^n} f(\phi(y)).
\]
We call this process \emph{trivialization} and we say that $\phi$ is a trivialization map.
In~\Cref{sec:trivialization}, we show that whenever $\phi$ is regular enough--- a diffeomorphism---these parametrizations act as a change of metric on $\MM$, and thus, applying gradient descent to this new problem is equivalent to performing $\rgd$ on the original problem with this new metric, for which standard convergence results hold.

After this, we look at two large families of parametrizations, the Riemannian exponential, and the Lie exponential. We analyze these from the point of view of the framework presented before, and we point out a problem that they present: they may create saddle points or local minima when near certain region in the manifold.

In~\Cref{sec:dyn_triv}, we introduce \emph{dynamic trivializations}.
They can be described as follows:

\paragraph{Main idea:} Lift the function $f$ to the current tangent space $T_{x_i}\MM$ using a map $\deffun{\phi_{x_i} : T_{x_i}\MM -> \MM;}$ by considering the trivialization $f \circ \phi_{x_i}$ (think $\phi_{x_i} = \exp_{x_i}$, or, for efficiency, any retraction). Optimize $f \circ \phi_{x_i}$ on $T_{x_i}\MM$ for a while using any standard optimization methods like \adam, \rmsprop, or \adagrad, since $T_{x_i}\MM$ is a linear space. When we are at a point $y_k \in T_{x_i}\MM$ on which $\phi_{x_i}$ might create saddle-points or local minima, then we consider the current point in the manifold $x_{i+1} \defi \phi_{x_i}(y_k)$ and we start optimizing the function $f \circ \phi_{x_{i+1}}$, \ie, lift the problem to $T_{x_{i+1}}\MM$.

This family of methods has Riemannian gradient descent and classic trivializations as limit cases, and in particular, they combine the strengths of the two. Furthermore, we show that these methods give a natural generalization of Euclidean optimizers to manifolds.

In~\Cref{sec:impl} we show how to compute the gradients associated to the Lie exponential and some cases of the Riemannian exponential for matrix manifolds. To this end, we compute a formula that allows for the approximation of the gradient of the exponential of matrices to machine-precision. We also show some examples of for how to use this theory to perform optimization on some matrix manifolds. In~\Cref{sec:examples} we compile an extended list of examples that we hope might be helpful to the reader.

Finally, in~\Cref{sec:experiments} we show how dynamic trivializations improve previously developed optimization techniques in the context of optimization with orthogonal constraints.

\section{Related Work}
\paragraph{Optimization on manifolds.} Most of the results on optimization on manifolds have found analogues in the Riemannian setting~\citep{udriste1994convex,absil2009optimization}.
Algorithms like conjugate gradient descent or the Newton method were first devised for specific families of manifolds~\citep{Smith:1993:GOM:165579,edelman1998geometry}, and then they were derived for general Riemannian manifolds~\citep{bonnabel2013stochastic,sato2015new,boumal2016global}.

Optimization methods on manifolds can be classified in two families: Those that follow geodesics, and those that follow retractions---\ie, first order approximations to geodesics. In the first family, convergence rates have been proven for most first order methods, both stochastic and non-stochastic~\citep{zhang2016first}, and even purely first-order accelerated methods~\citep{zhang2018towards}. When it comes to retractions, rates of convergence have been proved in the Lipschitz setting for first and second-order methods~\citep{boumal2016global}.

\paragraph{Trivialization.}
The trick of parametrizing a Lie group with elements in the Lie algebra through the Lie exponential map has been commonly used under the name of \emph{trivialization} in the area of differential equations on manifolds~\citep{magnus1954exponential,iserles1999solution,iserles2000lie}. We borrow the term, as the general idea behind these methods and ours is rather similar.

\paragraph{Optimization through parametrizations.}
Parametrizing a manifold in terms of a Euclidean space is a common technique in optimization and machine learning. For example when doing computations on symmetric positive definite matrices~\citep{arsigny2006log,arsigny2007geometric}, compact Lie groups~\citep{lezcano2019cheap}, the special orthogonal group~\citep{helfrich18a} or the unitary group~\citep{jing2017tunable,maduranga2018complex}. In~\citep{dreisigmeyer2018direct}, it is used through the Riemannian exponential to adapt 0\textsuperscript{th} order methods to naturally reductive homogeneous manifolds.

Our work finds the closest connections in the papers~\citep{lezcano2019cheap,helfrich18a,maduranga2018complex} These papers present the use of the Lie exponential and the Cayley map for optimization on $\SO{n}$. Our framework can be seen as an extension that can be implemented on top of them at a negligible execution cost. We also show that this theoretical improvement translates into a better convergence in practice in~\Cref{sec:experiments}.

\section{Problem Set-Up}
We include a short introduction to the concepts used from differential and Riemannian geometry in~\Cref{sec:diff_geo}.

We are interested in approximating the following problem over a connected manifold $\MM$
\[
    \min_{x \in \MM} f(x).
\]
A differentiable manifold does not carry intrinsically any metric information. As such, if one is interested in talking about concepts like the distance to the optimum, or the steepest descent direction, it is necessary to put additional structure on the problem.
One way to do this is to consider a Riemannian metric $\gm$ on $\MM$, turning $\MM$ into a Riemannian manifold.

\subsection{The classic approach: Riemannian gradient descent}
Given a complete metric on $\MM$, we can define geodesics $\deffun{\gamma_{p, v} : [0,\infty) -> \MM;}$ such that $\gamma_{p,v}(0) = p$, $\gamma'_{p,v}(0) = v$ for $v \in T_p\MM$. Then, the Riemannian exponential map is defined simply as the map that maps rays starting at the origin in the tangent space to geodesics on $\MM$. In symbols, $\exp_p(tv) \defi \gamma_{p,v}(t)$ for $t \geq 0$.

Using the Riemannian exponential, one can define Riemannian gradient descent in an analogous way to the Euclidean case:
\[
    x_{t+1} = \exp_{x_t}(-\eta \grad f(x_t)).
\]
In plain words, the algorithm follows the geodesic defined by the direction of steepest descent $-\grad f(x_t)$ for a time $\eta > 0$. This approach has been extensively studied in the literature and it has been proven to enjoy similar convergence properties to its Euclidean counterpart~\citep{absil2009optimization,bonnabel2013stochastic,boumal2016global,zhang2016riemannian}.

Sometimes it is convenient, due to computational constraints, to use a first order approximation to the exponential rather than the exponential map. This idea is encapsulated in the concept of a retraction.
\begin{definition}[Retraction]\label{def:retraction}
    A differentiable map $\deffun{r : T\MM -> \MM;}$ is called a retraction if for every $p \in \MM$, the map $\deffun{r_p : T_p\MM -> \MM;}$ satisfies $r_p(0) = p$ and $\pa{\dif r_p}_0 = \Id$.
\end{definition}
The update rule of Riemannian gradient descent along a retraction $r$ is then given by
\[
    x_{t+1} = r_{x_t}(-\eta \grad f(x_t)).
\]
In many cases, this update rule is enough to have the same convergence properties as in Riemannian gradient descent along the exponential map~\citep{boumal2016global}.

The main problem of Riemannian gradient descent comes from a practical point of view. On many practical problems, it has been empirically proved that algorithms like \adam~\citep{kingma2014adam}, \adagrad~\citep{duchi2011adaptive} or \rmsprop~\citep{tieleman2012lecture} outperform vanilla \sgd. These algorithms were designed to work on $\RR^n$, and although generalizations for product manifolds are in order \citep[\cf,][]{becigneul2018riemannian}, it is not clear how to generalize them to most manifolds used in practice, and thus take advantage of them in the Riemannian setting.

\section{Trivializations}\label{sec:trivialization}
We now introduce trivializations. Trivializations are functions that allow us to transform a constrained problem on a manifold to an unconstrained one.

\begin{definition}[Trivialization]
    Given a manifold $\MM$, we define a trivialization as a surjective map
    \[
        \deffun{\phi : \RR^n -> \MM;}.
    \]
\end{definition}

\begin{example}
    The most simple examples are found when $\MM$ has a product structure, \ie, for vectors. For example, for a fixed $n >0$, consider component-wise functions like rectified linear units, parametrizing non-negative vectors $\deffun{\fn{relu} : \RR^n -> \pa{\RR^+}^n;}$ or the sigmoid function $\deffun{\sigma : \RR^n -> [0,1]^n;}$.
\end{example}

Having a trivialization in hand, we can transform a constrained optimization problem into an unconstrained one by composing $f$ with $\phi$.
\[
    \min_{y \in \RR^n} f(\phi(y)).
\]

\begin{remark}
    When considering a parametrization $\phi$, the gradient $\grad f(x)$ changes into the gradient $\grad \pa{f \circ \phi}(y)$ for $x = \phi(y)$. For a $1$-dimensional trivialization, by the chain rule, if $\phi'(y) = 0$ for many $y \in \RR$, $\phi$ will not be a good parametrization, because then $\grad\pa{f \circ \phi}(y) = \grad f (\phi(y)) \phi'(y) = 0$, even though $\grad f(x)$ might not be zero.
    As such, not all trivializations are equally good.
\end{remark}

    We formalize this intuition for general trivializations in the following theorem.

\begin{theorem}\label{thm:change_metric}
    Let $\deffun{\phi : \RR^n -> \MM;}$ be a diffeomorphism. Then, solving the problem $\min_{y \in \RR^n} f(\phi(y))$ through gradient descent accounts for solving the problem $\min_{x \in \MM} f(x)$ using Riemannian gradient descent for a certain metric on $\MM$ induced by $\phi$.
\end{theorem}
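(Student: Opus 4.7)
The plan is to use $\phi$ to push forward the Euclidean metric to $\MM$ and show that, under this induced metric, the Euclidean gradient descent iterates on $f\circ\phi$ correspond exactly, via $\phi$, to Riemannian gradient descent iterates along the exponential map.

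First I would define the metric $\gm$ on $\MM$ by declaring $\phi$ to be an isometry: for $x\in\MM$ and $v,w\in T_x\MM$, set
\[
    \gm_x(v,w) \defi \scalar{(\dif \phi^{-1})_x v,\, (\dif \phi^{-1})_x w},
\]
where the inner product on the right is the standard one on $\RR^n$. Since $\phi$ is a diffeomorphism, $(\dif\phi^{-1})_x$ is a linear isomorphism $T_x\MM\to\RR^n$, so $\gm_x$ is well-defined, symmetric, positive-definite, and depends smoothly on $x$. Thus $\gm$ is a bona fide Riemannian metric and by construction $\phi\colon(\RR^n,\scalar{\cdot,\cdot})\to(\MM,\gm)$ is an isometry.

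Next I would identify the geodesics and Riemannian gradient of this metric. Because $\phi$ is an isometry and the geodesics in $\RR^n$ are straight lines, the geodesics of $(\MM,\gm)$ are precisely the images of straight lines under $\phi$. Consequently the Riemannian exponential at $x=\phi(y)$ is
\[
    \exp_x(u) = \phi\bigl(y + (\dif\phi^{-1})_x u\bigr), \qquad u\in T_x\MM.
\]
For the gradient, using the isometry together with the chain rule, for any $w\in\RR^n$,
\[
    \scalar{\grad(f\circ\phi)(y),w} = \dif f_{\phi(y)}\bigl((\dif\phi)_y w\bigr) = \gm_{\phi(y)}\bigl(\gradmani f(\phi(y)),(\dif\phi)_y w\bigr) = \scalar{(\dif\phi^{-1})_x\gradmani f(x),\,w}.
\]
Hence $\grad(f\circ\phi)(y) = (\dif\phi^{-1})_x \gradmani f(x)$, where $\gradmani$ denotes the Riemannian gradient with respect to $\gm$.

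Combining these two observations closes the argument. The Euclidean gradient descent step on $f\circ\phi$ is $y_{t+1}=y_t-\eta\grad(f\circ\phi)(y_t)$. Setting $x_t\defi\phi(y_t)$ and substituting the identities above,
\[
    \phi(y_{t+1}) = \phi\bigl(y_t - \eta(\dif\phi^{-1})_{x_t}\gradmani f(x_t)\bigr) = \exp_{x_t}\bigl(-\eta\,\gradmani f(x_t)\bigr),
\]
which is exactly the Riemannian gradient descent update on $(\MM,\gm)$. I expect the only subtle point to be verifying that the induced $\gm$ is smooth and that $\phi$ is genuinely an isometry in the Riemannian sense; the rest is bookkeeping with the chain rule. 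The identification of geodesics via the isometry is the cleanest way to avoid having to compute Christoffel symbols explicitly.
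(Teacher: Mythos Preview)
Your proof is correct and follows essentially the same strategy as the paper: push the Euclidean metric on $\RR^n$ forward through $\phi$ to obtain a metric on $\MM$, then verify that Euclidean gradient descent on $f\circ\phi$ corresponds under $\phi$ to Riemannian gradient descent on $\MM$ with respect to this induced metric.

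The only difference is in presentation. The paper (Appendix~\ref{sec:parametrizations}) develops the argument through the fibre-wise adjoint $\dif\phi^\ast$ and the language of Riemannian submersions: it first proves $\grad(f\circ\phi)=\dif\phi^\ast(\grad f)$ via the relation $\beta\circ\dif\phi^\ast=\dif\phi'\circ\alpha$ between the musical isomorphisms, and then defines the induced metric so that $\dif\phi^\ast$ becomes an isometry. You instead define the pushforward metric directly so that $\phi$ itself is an isometry, and then invoke the standard fact that isometries carry geodesics to geodesics and gradients to gradients. For a diffeomorphism the two constructions coincide (a Riemannian submersion between equidimensional manifolds is an isometry), so the content is identical; your route is slightly more elementary since it avoids introducing the adjoint and dual-bundle machinery, while the paper's formulation is set up to generalize beyond diffeomorphisms.
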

\vspace{-0.15in}
\begin{proof}
See~\Cref{sec:parametrizations}.
\end{proof}
\vspace{-0.15in}

This result tells us that, if $\phi$ is a diffeomorphism, $\phi$ will not add local minima or saddle points. It will simply act as a change of metric on the manifold. This already explains the good behavior of the $\tanh$ and sigmoid functions present in an \lstm{} or \gru{} in the context of gating.

At first sight, the situation of $\phi$ being a diffeomorphism seems too restrictive for general manifolds. We now introduce two parametrizations that are diffeomorphisms in \emph{almost all} the manifold.\footnote{This is taken with respect to the canonical Borel measure on the manifold induced by the metric.}

\subsection{The Riemannian trivialization}
Consider now the Riemannian exponential map. By the Hopf-Rinow theorem, it is surjective whenever $(\MM, \gm)$ is connected and complete. As such, in these cases, for any point $p \in \MM$, the Riemannian exponential map $\deffun{\exp_{\MM, p} : T_p\MM \pa{\iso \RR^n} -> \MM;}$ is an example of a trivialization.

\paragraph{Geometric intuition about the Riemannian trivialization.}
A direct corollary of Gauss' lemma says that the metric induced by the exponential parametrization $\exp_{\MM, p}$ is a first order approximation to the metric on the manifold around the point $p$~\citep[\cf{}][Lemma $5.5.7$]{petersen2016riemannian}.
In other words, the Riemannian trivialization changes the metric into a new one with the square of the distance to $p$ for points near $p$.

Let us now look at the behavior of the Riemannian trivialization in global terms.
\begin{theorem}[Properties of the Riemannian trivialization]\label{thm:properties_exp}
    Let $(\MM, g)$ be a connected, complete Riemannian manifold. Fix a point $p \in \MM$. Let $U_p \subset T_p\MM$ be the largest radially convex open neighborhood of zero on which $\exp_{\MM, p}$ is a diffeomorphism\footnote{A more formal way to define it would be $\overline{U}_p \defi \set{v \in T_p\MM | \exp_p(tv) \text{ is length minimizing for }t \in [0,1]}$.}
    then, $\exp_{\MM, p}(\overline{U}_p) = \MM$.

    Furthermore, define the \emph{cut locus in $T_p\MM$} as $\tilde{C}_p \defi \overline{U}_p \backslash U_p$. If  $V \in T_p\MM$ is another open neighborhood of the origin that contains a point in $\tilde{C}_p$, then $\exp_{\MM, p}$ is not a diffeomorphism on $V$.
\end{theorem}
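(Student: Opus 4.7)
The proof plan splits the statement into its two parts: surjectivity of $\exp_{\MM,p}$ on $\overline{U}_p$, and the obstruction to being a diffeomorphism on any open neighborhood of $0$ that reaches into $\tilde{C}_p$.

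For the first part, I would invoke the Hopf--Rinow theorem. Since $(\MM, \gm)$ is connected and complete, every $q \in \MM$ is joined to $p$ by a length-minimizing geodesic, which may be written as $\gamma(t) = \exp_{\MM,p}(tv)$ on $t\in[0,1]$ for some $v \in T_p\MM$. The hypothesis that $\gamma$ is minimizing on $[0,1]$ is precisely the footnote definition of $v \in \overline{U}_p$, so $q = \exp_{\MM,p}(v) \in \exp_{\MM,p}(\overline{U}_p)$, giving $\exp_{\MM,p}(\overline{U}_p) = \MM$.

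For the second part, my plan is to use the classical dichotomy for tangent cut points: any $w \in \tilde{C}_p$, written as $w = t_v v$ with $|v| = 1$, satisfies either (a) $(\dif\exp_{\MM,p})_w$ is singular, so that $w$ is a conjugate point, or (b) there is a distinct unit vector $v' \neq v$ with $\exp_{\MM,p}(t_v v) = \exp_{\MM,p}(t_v v')$ and both radial geodesics minimizing. In case (a), the failure of $(\dif\exp_{\MM,p})_w$ to be invertible at the point $w \in V$ immediately obstructs $\exp_{\MM,p}|_V$ from being even a local diffeomorphism at $w$, let alone a diffeomorphism. In case (b), I would argue by contradiction using the maximality built into $U_p$: if $\exp_{\MM,p}|_V$ were a diffeomorphism, one can assemble $U_p$ together with a radially convex cone cut out of $V$ around $w$ into a radially convex open neighborhood of $0$ on which $\exp_{\MM,p}$ remains a diffeomorphism, and which strictly contains $U_p$ since $w$ would lie in its interior, contradicting the maximality clause in the definition of $U_p$.

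The key technical step, and the one I expect to be the main obstacle, is verifying that the enlarged domain in case (b) is genuinely a domain of injectivity for $\exp_{\MM,p}$. Local invertibility on the union is automatic, but global injectivity requires ruling out coincident images across the two pieces. This is the natural place to invoke the Gauss lemma, which identifies radial segments with minimizing geodesics in $\MM$ of the same length, so that any collision $\exp_{\MM,p}(y_1) = \exp_{\MM,p}(y_2)$ across the two pieces would yield two distinct minimizing geodesics between $p$ and the common image, pinning $y_1$ and $y_2$ to $\tilde{C}_p$ and allowing one to tune the radial cone to avoid them.
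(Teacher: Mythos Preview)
The paper itself gives no proof --- it defers entirely to \S5.7.3 of Petersen --- so you are supplying far more than the paper does. Your argument for $\exp_{\MM,p}(\overline U_p)=\MM$ via Hopf--Rinow is correct and standard, as is the conjugate/non-conjugate dichotomy for points of $\tilde C_p$ and your handling of case~(a).

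Case~(b), however, has a real gap, and it is not merely technical: the second assertion is false as literally stated. Your plan is to glue a radial cone around $w$ taken from $V$ onto $U_p$ and argue, via the Gauss lemma, that any collision on the union would yield two \emph{minimizing} geodesics and hence endpoints in $\tilde C_p$. But to make $w$ interior you must push the cone radially past $w$, and beyond $\tilde C_p$ the radial geodesic is no longer minimizing, so that step fails. Concretely, on the flat torus $\RR^2/\mathbb{Z}^2$ with $p=[0]$ one has $U_p=(-\tfrac12,\tfrac12)^2$ and $w=(\tfrac12,0)\in\tilde C_p$; the shifted square $V=(-\tfrac12+\delta,\tfrac12+\delta)\times(-\tfrac12,\tfrac12)$ is a radially convex open neighbourhood of $0$ containing $w$ on which $\exp_p$ \emph{is} a diffeomorphism, while in your enlarged set $U_p\cup(\text{cone})$ the point $(\tfrac12+\varepsilon,0)$ collides with $(-\tfrac12+\varepsilon,0)\in U_p$ and neither point lies in $\tilde C_p$. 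What the reference actually establishes is the dichotomy itself, from which it follows that $\exp_p$ cannot be a diffeomorphism on any open set containing all of $\overline U_p$; the theorem here should be read through its footnote definition of $U_p$, and ``largest'' should be understood as ``maximal'' rather than ``maximum''.
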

\vspace{-0.15in}
\begin{proof}
    See Section $5.7.3$ in~\citep{petersen2016riemannian}.
\end{proof}
\vspace{-0.15in}

\Cref{thm:properties_exp} combined with~\Cref{thm:change_metric} tell us that there exists a radially convex neighborhood of zero on which $\exp_{\MM, p}$ acts as a change of metric, and that $\exp_{\MM,p}$ stops being a diffeomorphism in the boundary---and hence, can add minima or saddle points at these points. As the image of $\overline{U}_p$ is the whole $\MM$, if we write $C_p \defi \exp_{\MM, p}\pa{\tilde{C}_p}$, we have that $\MM$ decomposes in the disjoint union of $\exp_{\MM, p}(U_p)$ and $C_p$. The set $C_p$ is called the \emph{cut locus of $p$}.

The cut locus is a remarkably slippery object of study given that, in general, it is not differentiable. Nonetheless, we can still measure the relative size of this set in a topological sense, by means of the Hausdorff dimension.
\begin{theorem}[\citet{itoh1998dimension}]
    Let $\MM$ be a connected and complete Riemannian manifold of dimension $n$. For a point $p \in \MM$ the Hausdorff dimension of $\tilde{C}_p$ is either $0$ or $n-1$, and the Hausdorff dimension of $C_p$ is an integer less than $n$.
\end{theorem}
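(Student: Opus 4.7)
The plan is to describe $\tilde{C}_p$ explicitly as a radial graph over the unit tangent sphere, and then to transfer its regularity to $C_p$ via $\exp_{\MM,p}$. I begin by introducing the cut-time function $\deffun{\rho : S_p -> (0,\infty];}$ on the unit sphere $S_p \subset T_p\MM$, defined by $\rho(v) \defi \sup\set{t > 0 | \gamma_{p,v} \text{ is minimizing on } [0,t]}$. Classical facts from~\citep{petersen2016riemannian} identify $\tilde{C}_p$ with $\set{\rho(v) v | v \in \Omega}$, where $\Omega \defi \set{v \in S_p | \rho(v) < \infty}$ is open in $S_p \iso S^{n-1}$. When $\Omega = \emptyset$, the cut locus is empty and the claim on $\tilde{C}_p$ holds by convention; otherwise $\tilde{C}_p$ is a radial graph over an open subset of $S^{n-1}$.

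The main technical step is to prove that $\rho$ is locally Lipschitz on $\Omega$. The argument splits according to the two classical mechanisms by which $\gamma_{p,v}$ stops being minimizing at $t = \rho(v)$: either $\exp_{\MM,p}(\rho(v) v)$ is the first conjugate point along $\gamma_{p,v}$, or there exists a second distinct minimizing geodesic from $p$ to $\exp_{\MM,p}(\rho(v) v)$. In the latter case, applying the triangle inequality to pairs of minimizers meeting at a common endpoint gives a direct Lipschitz estimate for $\rho$ under perturbation of $v$ on the sphere. In the conjugate case, one controls the motion of the first conjugate point by a Jacobi field comparison argument, using the smooth dependence of Jacobi fields on initial conditions. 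Once both estimates are combined, the map $v \mapsto \rho(v) v$ is a locally bi-Lipschitz parametrization of $\tilde{C}_p$ by the open set $\Omega \subset S^{n-1}$, so $\dim_{\mathcal{H}} \tilde{C}_p = n - 1$.

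For $C_p = \exp_{\MM,p}(\tilde{C}_p)$, I would stratify $\tilde{C}_p$ by the rank of the restricted differential $\dif(\exp_{\MM,p}|_{\tilde{C}_p})$, which by Rademacher's theorem is defined almost everywhere on the Lipschitz hypersurface $\tilde{C}_p$. Let $\tilde{C}_p^{(k)}$ be the subset where this rank equals $k$, for $k = 0, 1, \ldots, n-1$. Since $\exp_{\MM,p}$ is smooth and $\tilde{C}_p$ is locally an $(n-1)$-dimensional Lipschitz graph, a standard Sard-type covering estimate combined with the area formula bounds the Hausdorff dimension of $\exp_{\MM,p}(\tilde{C}_p^{(k)})$ by $k$. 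Taking the union over $k$ and letting $k^\ast$ be the largest index for which $\tilde{C}_p^{(k)}$ carries positive $k$-dimensional measure, we obtain $\dim_{\mathcal{H}} C_p = k^\ast \in \set{0, 1, \ldots, n-1}$, which is an integer less than $n$ as claimed.

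The hard part will be the Lipschitz regularity of the cut-time function $\rho$, and in particular treating uniformly the transition between points of $\Omega$ where the cut is caused by conjugacy and those where it is caused by a second minimizer, since at such transition points both mechanisms occur simultaneously and a careful localization argument is needed. The stratification step for $C_p$ is then a relatively routine application of Sard-type estimates on Lipschitz submanifolds, once the structure of $\tilde{C}_p$ has been pinned down.
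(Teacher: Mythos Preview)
The paper does not prove this theorem; it is quoted verbatim as a result of \citet{itoh1998dimension} and used only to deduce the subsequent corollary that $\tilde{C}_p$ has Lebesgue measure zero. There is therefore no proof in the paper to compare your proposal against.

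Relative to the actual literature, your plan for $\tilde{C}_p$ is the right one and is essentially the Itoh--Tanaka strategy: write $\tilde{C}_p$ as the radial graph $v \mapsto \rho(v)v$ over an open set $\Omega \subset S_p$ and show the cut-time function $\rho$ is locally Lipschitz, whence $\tilde{C}_p$ is a Lipschitz hypersurface of dimension $n-1$ whenever $\Omega \neq \emptyset$. You correctly flag that the delicate point is the uniform Lipschitz estimate across the interface between conjugate-type and multiplicity-type cut points.

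Your argument for $C_p$, however, has a genuine gap. Stratifying $\tilde{C}_p$ by the almost-everywhere-defined rank of $\dif(\exp_{\MM,p}|_{\tilde{C}_p})$ produces only measurable strata, not submanifolds, and there is no ``standard Sard-type covering estimate'' that bounds the Hausdorff dimension of the image of a Borel set by the pointwise rank of a merely a.e.-defined differential. The area formula tells you that if the $(n-1)$-Jacobian vanishes a.e.\ on a stratum then its image has $\mathcal{H}^{n-1}$-measure zero, but this only gives $\dim_{\mathcal{H}} < n-1$, not an integer bound. Worse, the $\mathcal{H}^{n-1}$-null set where Rademacher fails is mapped by a Lipschitz map to a set of $\mathcal{H}^{n-1}$-measure zero, but that set can a priori have any Hausdorff dimension in $[0,n-1]$, so your stratification does not even cover $C_p$ with pieces of controlled dimension. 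Finally, even granting $\dim_{\mathcal{H}}\exp_{\MM,p}(\tilde{C}_p^{(k)}) \le k$ for each $k$, a countable union of such sets has Hausdorff dimension equal to the supremum of the piece dimensions, which is the largest $k$ with $\tilde{C}_p^{(k)} \neq \emptyset$, not the largest $k$ with positive $k$-measure as you wrote; and nothing forces the supremum to be \emph{attained} as an exact dimension rather than an upper bound. Itoh's actual route is a structural stratification of $C_p$ itself by the multiplicity of minimizing geodesics and the order of conjugacy, showing each stratum is a genuine smooth submanifold of a specific integer dimension; your rank-of-differential stratification on the Lipschitz side is not a substitute for that analysis.
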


Putting this result in the more familiar language of measures, we can argue that, although the cut locus can introduce problems in practice, the problematic set is not too large.\footnote{The analogous result for $C_p$ with respect to the Borel measure induced by the volume form is also true.}
\begin{corollary}
    $\tilde{C}_p$ has Lebesgue measure zero on $T_p\MM$.
\end{corollary}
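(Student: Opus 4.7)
The plan is to derive the corollary directly from the preceding theorem of Itoh by invoking the standard relationship between Hausdorff dimension and Lebesgue measure. Concretely, I would first identify $T_p\MM$ with $\RR^n$ via a linear isomorphism; since Lebesgue measure on $\RR^n$ agrees, up to a positive normalizing constant, with the $n$-dimensional Hausdorff measure $\mathcal{H}^n$, and linear isomorphisms map Lebesgue-null sets to Lebesgue-null sets, it suffices to show that $\mathcal{H}^n(\tilde{C}_p) = 0$.

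Next, I would apply the theorem just stated: the Hausdorff dimension $\dim_H(\tilde{C}_p)$ is either $0$ or $n-1$. In both cases $\dim_H(\tilde{C}_p) \leq n-1 < n$. By the definition of Hausdorff dimension as the threshold at which $\mathcal{H}^s$ jumps from $+\infty$ to $0$, any set $A$ with $\dim_H(A) < n$ satisfies $\mathcal{H}^n(A) = 0$. Applying this with $A = \tilde{C}_p$ gives $\mathcal{H}^n(\tilde{C}_p) = 0$, and hence its Lebesgue measure on $T_p\MM$ vanishes.

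Honestly, there is no substantive obstacle here: the content of the corollary is entirely packaged in the cited theorem, and the only thing being done is translating a statement about Hausdorff dimension into one about Lebesgue measure. The one small point worth mentioning in the write-up, for the reader's benefit, is the subtlety that Hausdorff dimension is defined intrinsically on any metric space, so one must either pick the Euclidean metric on $T_p\MM$ coming from the inner product $\gm_p$ (which is bi-Lipschitz equivalent to any other Euclidean structure and hence gives the same dimension and the same null sets) or just transport everything to $\RR^n$ via an orthonormal basis. Either way the conclusion is immediate, so the proof can be given in one or two lines.
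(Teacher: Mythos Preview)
Your proposal is correct and matches the paper's own proof essentially line for line: the paper also invokes Itoh's theorem to get $\dim_H(\tilde{C}_p)\leq n-1$, uses the definition of Hausdorff dimension to conclude $\mathcal{H}^n(\tilde{C}_p)=0$, and then notes that the $n$-Hausdorff measure is a constant multiple of Lebesgue measure. Your additional remark about the choice of Euclidean structure on $T_p\MM$ is a nice clarification but not something the paper spells out.
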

\vspace{-0.15in}
\begin{proof}
    By the definition of Hausdorff dimension, a set of dimension $n-1$ has $n$-Hausdorff measure $0$. Finally, just note that the $n$-Hausdorff measure is a multiple of the Lebesgue measure.
\end{proof}
\vspace{-0.15in}

\subsection{The Lie trivialization}\label{sec:lie_trivialization}

We now introduce a useful trivialization for Lie groups and other matrix manifolds.
Recall that for a Lie group $G$ we define its \emph{Lie algebra} as the tangent space to the identity element $\g \defi T_e G$. In Lie group theory there is a canonical trivialization given by the \emph{Lie exponential}.
For matrix Lie groups, which are the groups that we are interested in, the Lie exponential is exactly the exponential of matrices. We will denote the exponential of a matrix $A$ as $\exp(A)$ or $e^A$ for short.

For connected and compact Lie groups---\eg, $\SO{n}, \U{n}, \SU{n}, \Sp{n}$---this map is surjective and it coincides with the Riemannian trivialization at the identity for a suitable metric.
If it is not surjective, we can still use it as a trivialization of the image of $\g$ under $\exp$. In~\Cref{sec:lie_exponential} we explain how to use the exponential parametrization in the whole Lie group, even when it is not surjective.
Trivializations of this form for compact Lie groups were already studied in~\citep{lezcano2019cheap}.

The following theorem is a generalization for matrix Lie groups of a classic result.
\begin{theorem}[Properties of the Lie exponential]\label{thm:lie_exponential}
    Let $G$ be a matrix Lie group, the Lie exponential is a diffeomorphism on the set
    $U = \set{A \in \g | \abs{\Im\pa{\lambda_i(A)}} < \pi}$
    with $\lambda_i(A)$ the eigenvalues.
\end{theorem}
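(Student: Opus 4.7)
My plan is to establish three properties of $\exp|_U$: (i) $U$ is open in $\mathfrak{g}$, (ii) $d\exp_A$ is a linear isomorphism for every $A \in U$, and (iii) $\exp|_U$ is injective. Given these, the inverse function theorem shows $\exp(U)$ is open in $G$ and $\exp|_U$ is a local diffeomorphism onto its image; combined with global injectivity from (iii), this upgrades to a global diffeomorphism. Item (i) is immediate from the continuity of eigenvalues as functions of the matrix entries, since $U$ is cut out by strict inequalities on $|\Im \lambda_i|$.

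For (ii), I would invoke the standard formula for the differential of the matrix exponential,
\[
    d\exp_A(X) = \exp(A)\cdot \varphi(\ad_A)(X), \qquad \varphi(z) \defi \frac{1 - e^{-z}}{z},
\]
where $\varphi$ is an entire function (extended by $\varphi(0) = 1$). Left multiplication by $\exp(A)$ is always invertible, so invertibility of $d\exp_A$ reduces to invertibility of $\varphi(\ad_A)$ as an endomorphism of $\mathfrak{g}$. The eigenvalues of $\ad_A$ on the complexification $\mathfrak{g}_{\mathbb{C}} \subset \gl{n,\CC}$ are a subset of the differences $\lambda_i(A) - \lambda_j(A)$. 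The zero set of $\varphi$ is $2\pi i\,\mathbb{Z} \setminus \{0\}$, and the hypothesis $|\Im \lambda_i(A)| < \pi$ gives $|\Im(\lambda_i - \lambda_j)| < 2\pi$, so no nonzero zero of $\varphi$ is hit. Hence every eigenvalue of $\varphi(\ad_A)$ is nonzero and $\varphi(\ad_A)$ is invertible on $\mathfrak{g}$.

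For (iii), I would produce an explicit left inverse using the principal matrix logarithm. On the set of matrices with no eigenvalue in $(-\infty, 0]$, the principal logarithm $\log$ is well-defined by the holomorphic functional calculus (equivalently, block-wise on the Jordan form using the principal scalar $\log$). If $A \in U$, then the eigenvalues of $\exp(A)$ are $e^{\lambda_i(A)}$ with $|\Im \lambda_i(A)| < \pi$, so they avoid $(-\infty, 0]$ and $\log(\exp(A))$ is defined. Because the scalar identity $\log(e^z) = z$ holds on the open strip $\{|\Im z| < \pi\}$, reducing to the Jordan form of $A$ and applying the functional calculus gives $\log(\exp(A)) = A$ for every $A \in U$. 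This proves injectivity and identifies the inverse of $\exp|_U$ on the image.

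The main obstacle I anticipate is justifying the identity $\log \circ \exp = \mathrm{id}$ on $U$ when $A$ has non-trivial nilpotent part in its Jordan decomposition: one has to be careful that the holomorphic functional calculus is being applied on a simply connected domain containing both the spectrum of $A$ and that of $\exp(A)$, so that $\log \circ \exp$ is represented by the holomorphic function $z \mapsto z$ rather than by a different branch. The differential computation in step (ii) is, by contrast, a direct consequence of the explicit series expansion of $\varphi(\ad_A)$.
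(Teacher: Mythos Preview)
Your proof is correct, and your injectivity argument via the principal matrix logarithm takes a genuinely different route from the paper's. The paper instead invokes a theorem of Hille: if $e^A = e^B$ and no two eigenvalues of $A$ differ by a nonzero multiple of $2\pi i$, then $AB = BA$. From commutativity one gets $e^{A-B} = \I$; the spectral constraint on $U$ then forces all eigenvalues of $A-B$ to vanish, and a final step shows that the only nilpotent matrix $N$ with $e^N = \I$ is $N = 0$. Your approach is more direct and more explicit: it exhibits the global inverse $\log$ outright and avoids the commutativity detour entirely, at the modest cost of appealing to the composition rule for the holomorphic functional calculus. The obstacle you flag about Jordan blocks is not a real one: the identity $g(f(A)) = (g\circ f)(A)$ holds whenever $f$ is holomorphic on a neighborhood of $\spec(A)$ and $g$ on a neighborhood of $f(\spec(A))$, Jordan structure notwithstanding, so $\log(\exp(A)) = A$ follows immediately from $\log \circ \exp = \mathrm{id}$ on the open strip $\set{z \in \CC \mid \abs{\Im z} < \pi}$. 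For step~(ii) the paper simply cites the same local-diffeomorphism fact from Rossmann that you spell out via $\varphi(\ad_A)$.
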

\vspace{-0.15in}
\begin{proof}
    See~\Cref{sec:proof}.
\end{proof}
\vspace{-0.15in}

This result is the counterpart of~\Cref{thm:properties_exp} for the Lie trivialization on general matrix Lie groups. The boundary of this set has similar properties as those of the cut locus for the Riemannian trivialization for groups like $\GL{n}$ or $\SO{n}$.\footnote{The constant $\pi$ is tight for matrix manifolds that contain matrices with eigenvalues that are $2\pi i$ apart. For these manifolds, the matrix exponential fails to be a diffeomorphism on some points of the boundary of $U$.}
As such, this trivialization presents the same problem as the Riemannian trivialization: It works as a change of metric for points that are close to the identity matrix, but it creates local minima and saddle points on some points of the manifold, which we might encounter as the optimization method progresses.

\section{Dynamic Trivializations}\label{sec:dyn_triv}
In the last section, we have seen rather general families of trivializations that cover most of the manifolds used in practice. We have seen how these trivializations act as a change of metric around the initial point---$p$ in the case of the Riemannian trivialization and the identity matrix in the case of the Lie trivialization---but we have also shown that the optimization process can be affected as it deviates from the initial point.

Note that, in the case of the exponential trivialization, we have a map from any tangent space of $\MM$ onto $\MM$, but we are just using one of them as a trivialization. We can leverage the structure of $T\MM$ in order to solve the problem that the trivializations introduced above. Instead of always using $\exp_{\MM, p}$, we can use it just for $K$ optimization steps and then change $p$
to the point on which we find ourselves on the manifold after those $K$ steps. This idea is formalized in the following algorithm.

\begin{samepage}
\begin{algorithmm}[Dynamic trivialization through retractions]\label{alg:dyn_triv}
    Given a retraction $\phi$, an integer $K > 0$ or $K = \infty$, and a starting point $p_0$, the dynamic trivialization induced by $\phi$ is defined as the sequence of problems indexed by $i = 0, 1, \dots$
    \[
        \min_{y \in T_{p_i}\MM} f(\phi_{p_i}(y))
    \]
    where $p_{i+1} \defi \phi_{p_i}(y_{i, K}) \in \MM$, and $y_{i, k}\in T_{p_i}\MM$ for $k=1, \dots, K$, is a sequence of approximations given by a Euclidean optimization algorithm---\eg, \sgd, \adam, \adagrad, \rmsprop, …---applied to the $i$-th problem with starting point $y_{i, 0} = 0$. We say that $p_i$ is the \emph{basis} at step $i$.
\end{algorithmm}
\end{samepage}

\begin{remark}
    Note that in this case we have dropped the condition of $\deffun{\phi_p : T_p\MM -> \MM;}$ being surjective. This is because, as long as $\MM$ is connected, we can still reach any point in $\MM$ in the optimization process by changing the basis of the dynamic trivialization whenever $K < \infty$.
\end{remark}

\ifarxiv
\begin{figure*}[!tbp]
\centering
  \begin{minipage}[b]{0.49\textwidth}
      \centering
      \includegraphics[width=.696\columnwidth]{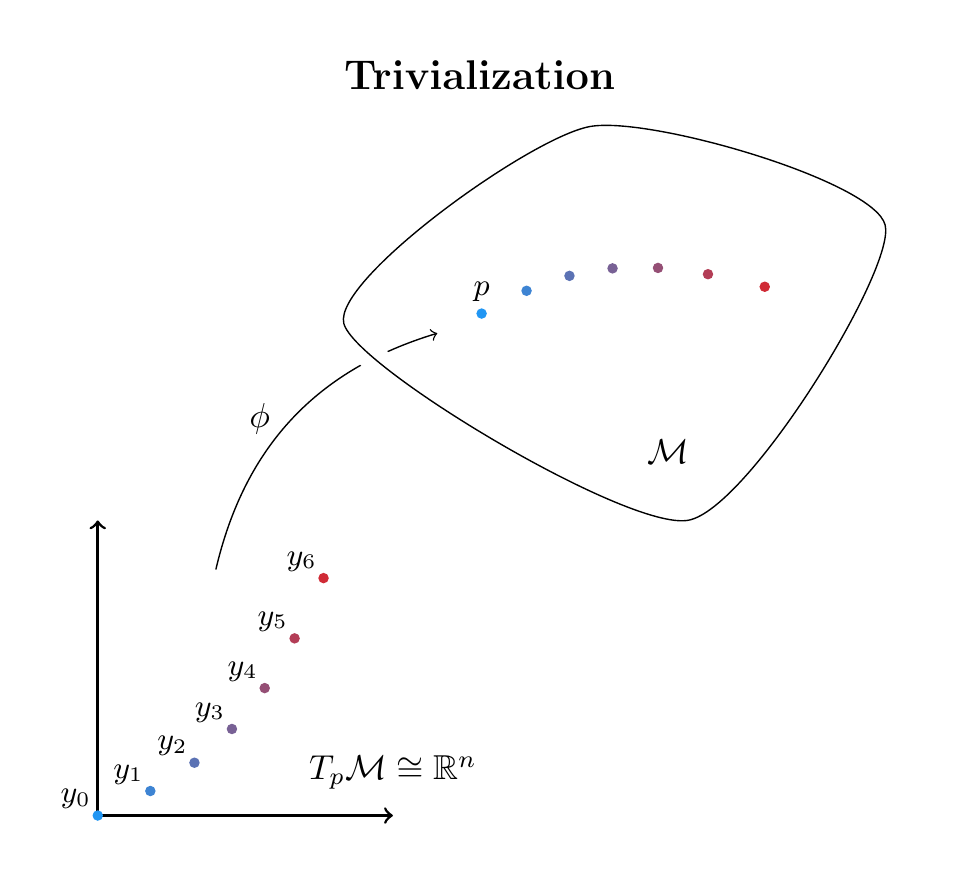}
  \end{minipage}
  \begin{minipage}[b]{0.49\textwidth}
      \centering
      \includegraphics[width=\columnwidth]{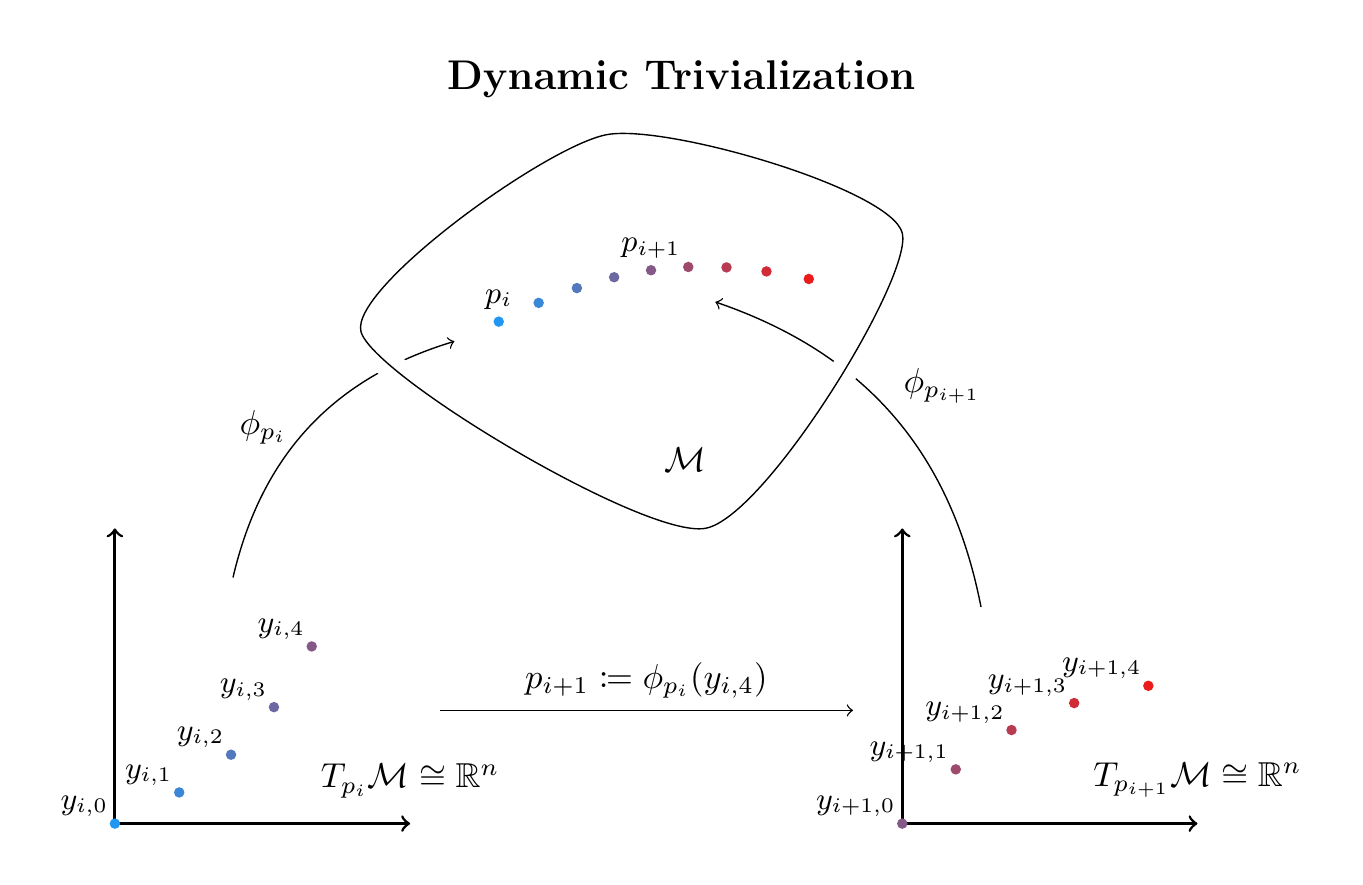}
  \end{minipage}
  \caption{Example of the trivialization and dynamic trivialization procedure. The dynamic trivialization in this example has $K=4$.}
  \label{fig:optimization}
\end{figure*}
\fi

This procedure has two interesting limit cases.

\paragraph{Generalization of trivializations.}
For $K = \infty$, \ie, no change of basis, it reduces to the trivialization algorithms described in~\Cref{sec:trivialization} with the trivialization $\phi_{p_0}$, provided that $\phi_{p_0}$ is surjective.

\paragraph{Generalization of Riemannian gradient descent.}
In the case $K=1$, we are changing the basis of the trivialization on every step. When the optimization process used to generate the iterates $y_{i, k}$ is regular \sgd, this method recovers exactly stochastic Riemannian gradient descent using $\phi$ as a retraction. For this, just note that by the chain rule and the definition of a retraction
\[
    \dif \pa{f \circ \phi_{p_i}}_0
    = \pa{\dif f}_{\phi_{p_i}(0)} \circ \pa{\dif \phi_{p_i}}_0
    = \pa{\dif f}_{\phi_{p_i}(0)}
    = \pa{\dif f}_{p_i}.
\]
From this it follows that
\[
    \grad \pa{f \circ \phi_{p_i}}(0) = \grad f \pa{p_i}
\]
so the update rule simplifies for a learning rate $\eta > 0$ can be rewritten as
\[
    y_{i, 1} = -\eta\grad f(p_i) \qquad p_{i+1} = \phi_{p_i}(-\eta\grad f(p_i))
\]
and $p_{i+1}$ are exactly the iterates given by doing Riemannian \sgd{} using the retraction $\phi$.

In particular, we have proved that for $\phi = \exp_{\MM}$, we recover stochastic Riemannian gradient descent. As such, we can see dynamic trivializations as an interpolation between the trivialization method using $\exp_{\MM}$ and stochastic Riemannian gradient descent.

More interesting is perhaps the case when we use a different optimizer to generate the iterates $y_{i, k}$. In this case, dynamic trivializations yield a natural generalization to manifolds of the algorithm used to generate the iterates, \ie, \adam, \adagrad{}, \rmsprop{}, \etc.

\section{Gradient Computations and Examples}\label{sec:impl}
The last missing piece needed to implement dynamic trivializations is the explicit computation of their gradients. We will do so for the two families presented above.

\subsection{The matrix exponential}
We first look at the matrix exponential. This function not only defines the Lie trivialization, but it is also essential to compute the Riemannian exponential in many matrix manifolds (\cf, \Cref{sec:examples}). In order to implement the dynamic trivialization algorithm within the context of first-order methods we need an approximation of the trivialization map and its gradient.

The current fastest machine-precision approximation to the matrix exponential was formulated in~\citep{al2009new}. On the other hand, it is not clear how to compute the gradient of this parametrization. The following proposition settles this problem.
\begin{proposition}[Gradient of the exponential parametrization]\label{prop:grad_exp}
    Let $\deffun{f : \M{n} -> \RR;}$ be a function defined on matrices, and let $\exp$ be the matrix exponential, we have
    \[
        \grad \pa{f \circ \exp}(A) = \pa{\dif \exp}_{\trans{A}}\pa{\grad f(e^A)}.
    \]
\end{proposition}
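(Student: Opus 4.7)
The plan is to unpack the statement as an identity between two elements of $\M{n}$ that pair identically against every test direction under the Frobenius inner product $\scalar{X,Y} \defi \tr(\trans{X}Y)$, and then to reduce everything to the fact that $(\dif \exp)_A$ and $(\dif \exp)_{\trans{A}}$ are mutual adjoints with respect to this inner product.

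First I would apply the chain rule: for any $H \in \M{n}$,
\[
    \pa{\dif(f \circ \exp)}_A(H)
    = \pa{\dif f}_{e^A}\!\pa{\pa{\dif \exp}_A(H)}
    = \scalar{\grad f(e^A), \pa{\dif \exp}_A(H)}.
\]
Since by definition $\pa{\dif (f\circ\exp)}_A(H) = \scalar{\grad(f\circ\exp)(A), H}$, identifying $\grad(f \circ \exp)(A)$ is equivalent to finding the adjoint of the linear map $\pa{\dif \exp}_A$ applied to $\grad f(e^A)$. So the statement reduces to the identity
\[
    \pa{\pa{\dif \exp}_A}^{\!\ast} = \pa{\dif \exp}_{\trans{A}}.
\]

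Next I would invoke the classical integral formula for the differential of the matrix exponential,
\[
    \pa{\dif \exp}_A(H) = \int_0^1 e^{sA}\, H\, e^{(1-s)A}\, \dif s,
\]
and use the elementary fact that the Frobenius adjoint of the linear map $Y \mapsto MYN$ is $X \mapsto \trans{M}X\trans{N}$, which follows from cyclicity of the trace. Putting these together,
\[
    \scalar{X, \pa{\dif \exp}_A(Y)}
    = \int_0^1 \scalar{e^{s\trans{A}} X\, e^{(1-s)\trans{A}},\, Y}\,\dif s
    = \scalar{\pa{\dif \exp}_{\trans{A}}(X),\, Y},
\]
which is the desired adjoint identity. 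Setting $X = \grad f(e^A)$ and comparing to the chain-rule expression proves the proposition.

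The main obstacle, if any, is simply knowing (or quickly rederiving) the integral representation of $\pa{\dif \exp}_A$; everything else is bookkeeping. This formula is standard and can itself be proved by expanding $e^{A+tH}$ in its power series, differentiating termwise in $t$ at $t=0$, and recognizing the resulting double sum as the stated integral via the Beta-function-type identity $\int_0^1 s^j (1-s)^k \,\dif s$ matched against binomial coefficients. No additional structure on $A$ is required (in particular, $A$ need not be normal), so the identity holds on all of $\M{n}$ as claimed.
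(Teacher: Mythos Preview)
Your proof is correct and follows essentially the same strategy as the paper: reduce the gradient identity, via the chain rule and the characterization $\grad(f\circ\phi) = (\dif\phi)^\ast(\grad f)$, to the adjoint identity $\pa{(\dif\exp)_A}^\ast = (\dif\exp)_{\trans{A}}$, and then prove the latter using that the Frobenius adjoint of $Y\mapsto MYN$ is $X\mapsto \trans{M}X\trans{N}$.

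The only difference is cosmetic: the paper expresses $(\dif\exp)_A$ via the power-series expansion $\sum_n \tfrac{1}{n!}\sum_{i=0}^n X^i E X^{n-i}$ and takes adjoints term by term, whereas you use the integral representation $\int_0^1 e^{sA}He^{(1-s)A}\,\dif s$ and take adjoints under the integral. The paper's version has the slight advantage that it applies verbatim to any real-analytic matrix function (they in fact state and prove the result at that level of generality), while your integral formula is specific to $\exp$; but for the proposition as stated the two arguments are interchangeable.
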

\vspace{-0.15in}
\begin{proof}
    See~\Cref{sec:matrix_grad}.
\end{proof}
\vspace{-0.15in}

This proposition together with the approximation algorithm for $\dif \exp$ presented in~\citep{al2009computing} allows us to approximate to machine-precision this gradient.

This formula readily allows for the implementation of the Riemannian dynamic trivialization on many matrix manifolds. We give examples of some of these in~\Cref{sec:examples}.

\subsection{Lie exponential for matrix Lie groups}\label{sec:lie_exponential}
The Lie exponential on a Lie group $G$ is just defined on the Lie algebra $\g = T_eG$. On matrix Lie groups, we can identify any tangent space of $G$ with $\g$. Explicitly, if $\tilde{A} \in T_B G$, then $B^{-1}\tilde{A} \in \g$. Furthermore, if we choose a left-invariant metric on the Lie group, we can then use left multiplication to map the result $\exp(B^{-1}\tilde{A})$ to a neighborhood of $B$. In symbols, we can define
\[
    \deffun{\exp_B : T_B G -> G; \tilde{A} -> B\exp\pa{B^{-1} \tilde{A}}}
\]
We give the gradient of this parametrization in~\Cref{corol:gl}. This function constitutes a dynamic trivialization on any connected matrix Lie group, like, for example, $\SO{n}$, $\U{n}$, $\SL{n}$, or $\GLp{n}$.

\subsection{Other retractions}\label{sec:retractions}
Sometimes one cannot afford to approximate the exponential exactly, as it can be very costly. In this case, the standard alternative are retractions~\cite{boumal2016global}.

\paragraph{Cayley map.}
This is one of the most well known retractions to optimize over $\SO{n}$ (\cf, \cite{absil2009optimization,helfrich18a})
\[
    \deffun{\cay : \Skew{n} -> \SO{n}; A -> (\I + A)(\I - A)^{-1}}
\]
This can be made into a dynamic retraction using the same trick as we did with the exponential, considering $\cay_B(A) = B\cay(B^{-1}\tilde{A})$, for $B \in \SO{n}$, $\tilde{A} \in T_B\SO{n}$.

\paragraph{Projectors.}
Another common retraction used in matrix manifolds $\MM \subset \RR^{n \times n}$ is the one given by $\pi_{\MM}(x+v)$ for $x \in \MM$, $v \in T_x\MM$ and $\pi_{\MM}$ the projection from $\RR^{n \times n}$ onto $\MM$. For example, for $\MM = \SO{n}$, we have that for a matrix $B \in \RR^{n \times n}$ with SVD decomposition $B = U\Sigma\trans{V}$, its projection onto $\SO{n}$ is given by $\pi_{\SO{n}}(B) = U\trans{V}$.\footnote{Formally, $\pi_{\SO{n}}$ is well-defined for matrices such that $\det B>0$, that is, $\deffun{\pi_{\SO{n}} : \GLp{n} -> \SO{n};}$. Note that this function is not a diffeomorphism but a submersion. \Cref{thm:change_metric} can be extended to this case.} with gradient computed in~\citep[\cf{}][Eq. $2.18$]{kenney1991polar}.

We workout more useful examples for common manifolds in~\Cref{sec:examples}.

\section{Experiments}\label{sec:experiments}
In this section, we assess the effectiveness of dynamic trivializations (\dtriv) in the context of orthogonal optimization. We test the framework with the basis changed every $K = 1, 100, \infty$ steps.

We compare it against the most performant previous approaches presented for this task in the context of orthogonal optimization and a vanilla \lstm. These approaches are orthogonal exponential trivialization~\citep[\exprnn{}][]{lezcano2019cheap}, orthogonal and unitary Cayley trivializations \citep[\scornn{} / \scurnn][]{helfrich18a,maduranga2018complex}, and Riemannian gradient descent \citep[\rgd][]{wisdom2016full}.

The architecture on which we are testing the dynamic trivialization is the same as in the papers above: A vanilla \rnn{} with an orthogonal layer parametrized using the Lie trivialization (\cf, \Cref{sec:lie_exponential})
\[
    h_{t+1} = \sigma\pa{\exp_B(A) h_t + Tx_{t+1}}.
\]
The update procedure for $B$ was described in~\Cref{alg:dyn_triv} ($K = 1, 100, \infty$).

\begin{remark}
    Note that $\rgd$ is equivalent to \dtriv{}$1$ together with the optimizer \sgd. Furthermore, \exprnn{} is equivalent \dtriv{}$\infty$ only that \exprnn{} has the basis on the identity matrix and \dtriv{}$\infty$ has the basis on the matrix to which it is initialized.
\end{remark}

We test this architecture on two different tasks that have become the standard to test the performance of \rnn s in the context of long-term recall and long-term memory, namely the pixel-by-pixel \mnist{} and the \timit{} dataset~\citep{arjovsky2016unitary,henaff2016recurrent,mhammedi2017efficient,helfrich18a,maduranga2018complex,lezcano2019cheap}. We do not present results for the copying problem, as task is too simple to draw any meaningful conclusions, as explained in~\citet{henaff2016recurrent}. \footnote{For reference, dynamic trivializations are also able to converge to the correct answer stably, as \exprnn.}

We detail all the hyperparameters and set-up in~\Cref{sec:hyperparam}. The code and instructions to replicate these experiments can be found in
\begin{table}[t]
    \centering
\begin{minipage}[b]{0.40\columnwidth}
\caption{Best test accuracy at \mnist{} and \pmnist{}.}
\label{tab:mnist}
\begin{small}
\begin{sc}
\begin{tabular}{l c c c}
    \toprule
    Model & n & \mnist & \pmnist \\
    \midrule
    \midrule
    \dtriv{}$1$ & $170$ & $\mathbf{98.3}$ & $\mathbf{95.2}$ \\
    \dtriv{}$100$ & $170$ & $98.2$ & $95.1$ \\
    \dtriv{}$\infty$ & $170$ & $98.1$ & $95.0$ \\
    \exprnn & $170$ & $98.0$ & $94.9$ \\
    \scornn & $170$ & $97.2$ & $94.8$ \\
    \scurnn & $116$ & $97.6$ & $94.9$ \\
    \lstm & $128$ & $81.9$ & $79.5$ \\
    \rgd & $116$ & $94.7$ & $92.5$ \\
    \midrule
    \dtriv{}$1$ & $360$ &  $98.4$ & $96.3$ \\
    \dtriv{}$100$ & $360$ &  $98.8$& $96.4$ \\
    \dtriv{}$\infty$ & $360$ &  $\mathbf{98.9}$ & $\mathbf{96.5}$ \\
    \exprnn & $360$ & $98.4$ & $96.2$ \\
    \scornn & $360$ & $98.1$ & $95.9$ \\
    \scurnn & $250$ & $98.3$ & $96.2$ \\
    \lstm & $256$ & $88.8$ & $88.8$ \\
    \rgd & $256$ & $96.1$ & $93.9$ \\
    \midrule
    \dtriv{}$1$ & $512$ & $98.7$ & $96.7$ \\
    \dtriv{}$100$ & $512$ & $\mathbf{99.1}$ & $96.7$ \\
    \dtriv{}$\infty$ & $512$ & $99.0$ & $\mathbf{96.8}$ \\
    \exprnn & $512$ & $98.7$ & $96.6$ \\
    \scornn & $512$ & $98.2$ & $96.5$ \\
    \lstm & $512$ & $91.9$ & $91.8$ \\
    \rgd & $512$ & $97.3$ & $94.7$ \\
    \bottomrule
\end{tabular}
\end{sc}
\end{small}
\end{minipage}
\hspace{0.05\columnwidth}
\begin{minipage}[b]{0.45\columnwidth}
\caption{Test \mse{} at the end of the epoch with the lowest validation \mse{} for the \timit{} task.}
\label{tab:timit}
\begin{small}
\begin{sc}
\begin{tabular}{l c c c c}
    \toprule
    Model & n & Val. \mse{} & Test \mse{}\\
    \midrule
    \midrule
    \dtriv{}$1$ & $224$ & $6.55$ & $6.54$ \\
    \dtriv{}$100$ & $224$ & $4.80$ & $4.77$ \\
    \dtriv{}$\infty$ & $224$ & $\mathbf{4.75}$ & $\mathbf{4.71}$ \\
    \exprnn & $224$ & $5.34$ & $5.30$ \\
    \scornn & $224$ & $9.26$ & $8.50$ \\
    \scurnn & $128$ & $9.42$ & $7.23$ \\
    \lstm & $84$ &  $15.42$ & $14.30$ \\
    \rgd & $128$ &  $15.07$ & $14.58$ \\
    \midrule
    \dtriv{}$1$ & $322$ & $4.56$ & $4.55$ \\
    \dtriv{}$100$ & $322$ & $3.80$ & $3.76$ \\
    \dtriv{}$\infty$ & $322$ & $\mathbf{3.39}$ & $\mathbf{3.76}$ \\
    \exprnn & $322$ & $4.42$ & $4.38$ \\
    \scornn & $322$ & $8.48$ & $7.82$ \\
    \lstm & $120$ & $13.93$ & $12.95$ \\
    \rgd & $192$ & $15.10$ & $14.50$ \\
    \midrule
    \dtriv{}$1$ & $425$ & $4.21$ & $4.17$ \\
    \dtriv{}$100$ & $425$ & $2.02$ & $1.99$ \\
    \dtriv{}$\infty$ & $425$ & $\mathbf{2.00}$ & $\mathbf{1.97}$ \\
    \exprnn & $425$ & $5.52$ & $5.48$ \\
    \scornn & $425$ & $7.97$ & $7.36$ \\
    \scurnn & $258$ & $4.40$ & $3.39$ \\
    \lstm & $158$ & $13.66$ & $12.62$ \\
    \rgd & $256$ & $14.96$ & $14.69$ \\
    \bottomrule
\end{tabular}
\end{sc}
\end{small}
\end{minipage}
\end{table}

\ifarxiv
\begin{center}
\fi
\url{https://github.com/Lezcano/expRNN}
\ifarxiv
\end{center}
\fi

\subsection{Pixel-by-pixel \mnist}
This task consists of classifying the hand-written images of numbers in the \mnist{} dataset~\citep{lecun-mnisthandwrittendigit-2010} by processing them as a sequence pixel-by-pixel. Each image has $28 \times 28$ pixels, so the sequences are of length $784$. The \emph{unpermuted task} (\mnist) processes the row-by-row flattened image, the \emph{permuted task} (\pmnist) samples a permutation of size $784$ at the beginning and then uses it to permute all the images after flattening them. This task was introduced in~\citet{le2015simple}.

\Cref{tab:mnist} is structured so that architectures with the same number of parameters are compared together.
As we can see, the addition of any dynamic trivialization to the Lie parametrization improves the results on this experiment by $0.4\%$ out of the $1.3\%$ possible in the largest size. Moreover, it always improves the previous results, suggesting that it is always a better option to use dynamic trivializations rather than just plain trivializations. In general, we saw that $\dtriv{}100$ and $\dtriv{}\infty$ gave the highest stability and the best results across the experiments.

\subsection{\timit{} speech dataset}
The \timit{} dataset~\citep{garofolo1992timit} is a set of variable-length real-world speech recordings. These recordings are first downsampled to $8$kHz and then transformed into log-magnitudes via a short-time Fourier transform, giving sequences of $129$ complex numbers per step, and a variable length between $61$ and $490$. The task consists of predicting the next log-magnitude given the previous ones. This experiment was introduced in~\citet{wisdom2016full}.

In this experiment we see a similar behavior of the dynamic trivializations as the one already seen in the \mnist{} and \pmnist{} experiments. It also happens in this experiment that \dtriv{}$100$ and \dtriv{}$\infty$ always improve the performance of their static counterparts with base at the identity and of \rgd.

In the experiments in \scurnn{} they explicitly mention that they are computing the \mse{} without discarding the zeros used to pad the variable-length sequences~\citep{maduranga2018complex}. As such, when computing the \mse, they are dividing by an incorrect number---the longest element in the batch times the elements in the batch---rather than by the correct one---the sum of the lengths of all the elements in the batch. We computed the correct validation and test loss in~\Cref{tab:timit}.

\section{Conclusion and Future Work}
In this paper we have presented a novel way to perform optimization on manifolds that combines the strengths of the two most popular optimization techniques used in machine learning and neural networks---parametrizations and Riemannian gradient descent. We have shown that, by moving the initial point of the parametrization, as the metric is distorted less from the Euclidean one, we can achieve an improvement on the convergence of the neural network.

We leave open an interesting line of research based on applying dynamic trivializations to allow optimization on other interesting manifolds. As a first step in this direction, we detail examples of some computations for the most common manifolds used in optimization in~\Cref{sec:examples}.

\section*{Acknowledgements}
We would like to thank the help of Jaime Mendizabal and Momchil Konstantinov for the very useful feedback and suggestions and Prof.\ Andras Juhasz for the computing power.

The work of MLC was supported by the Oxford-James Martin Graduate Scholarship and the ``la Caixa'' Banking Foundation (LCF/BQ/EU17/11590067).

\bibliography{refs}
\clearpage
\appendix
\section{Differential and Riemannian Geometry}\label{sec:diff_geo}
In this section we give a short introduction to the concepts used in the paper and in the appendix of the theories of differential and Riemannian geometry and Lie groups. The standard modern introduction to differential geometry is~\citet{lee2013introduction}. This book also gives an introduction to Lie groups. Introductory texts in Riemannian geometry are~\citet{do1992riemannian,lee2018introduction}. Introductory references for Lie groups are~\citet{rossmann2006lie,hall2015lie}. Although not covered in this summary, two more advanced texts that cover the classical theory of the cut locus through Jacobi fields are~\citet{gallot2012riemannian,petersen2016riemannian}.

\subsection{Differential Geometry}
Let $\MM$ be an $n$-dimensional differentiable real manifold.
$\MM$ has an associated global object called the \emph{tangent bundle} $T\MM \defi \sqcup_{p \in \MM} \set{p} \times T_p\MM$, that is, the disjoint union of all the tangent spaces at every point of $\MM$. The tangent bundle comes with a structure of a $2n$-dimensional differentiable manifold. A point in $T\MM$ consists then of a pair $(p, v)$ with $p \in \MM$ and $v \in T_p\MM$.
On each point, we also have the \emph{cotangent space} $T_p^\ast\MM$ of linear applications from vectors onto the real numbers. The disjoint union of all the cotangent spaces is another manifold $T^\ast\MM$ called the \emph{cotangent bundle}.
When considering these bundles, tangent spaces $T_p\MM$ and cotangent spaces $T_p^\ast\MM$ are sometimes called \emph{fibres}.

An \emph{affine connection} $\conn$ is a bilinear form that, given two vector fields $X, Y$, assigns a new one $\conn_X Y$, and it is tensorial on the first component and Leibnitz on the second. An affine connection defines a notion of \emph{parallel vector fields}. We say that a vector field $Z$ is parallel along a curve $\deffun{\gamma : [0,1] -> \MM;}$ if $\conn_{\gamma'} Z = 0$ where $\gamma' \defi \dif \gamma\pa{\frac{\dif}{\dif t}}$. For any curve, given an initial vector $Z_0$, there exists a unique parallel vector field $Z$ along it such that $Z(0) = Z_0$. We say that the vector $Z(t)$ is the parallel transport of $Z(0)$ for $t \in [0, \epsilon)$.

\subsection{Riemannian Geometry}
A \emph{Riemannian manifold} is a differentiable manifold together with a smooth metric $\deffun{\gm_p : T_p\MM \times T_p\MM -> \RR;}$ which is symmetric and positive definite. A metric induces a distinguished connection called the \emph{Levi-Civita connection}. This is the unique connection that is torsion-free, $\conn_X Y - \conn_Y X = [X, Y] \defi XY -YX$, and it is compatible with the metric, $\mathrm{D}_Z(\gm(X, Y)) = \gm(\conn_Z X, Y) + \gm(X, \conn_Z Y)$, where $\mathrm{D}_Z$ denotes the directional derivative in the direction of $Z$. Whenever we talk about a connection on a Riemannian manifold we will always be referring to the Levi-Civita connection.

A Riemannian manifold has a notion of length of a differentiable curve $\deffun{c : [0,1] -> \MM;}$, $L(c) = \int_0^1 \norm{\gamma'(t)}\,\dif t$. When the manifold is connected, this allows to put the structure of a metric space on the manifold, defining the distance between two points as the length of the shortest piece-wise differentiable curve joining these two points.

Given a connection, we define a \emph{geodesic} $\deffun{\gamma : [0, \epsilon) -> \MM;}$ as a self-parallel curve, $\conn_{\gamma'} \gamma' = 0$. Geodesics are defined for any starting conditions $(p, v) \in T\MM$, $\gamma(0) = p$, $\gamma'(0) = v$ on an interval $[0, \epsilon)$. If a Riemannian manifold is connected and complete, the \emph{Hopf-Rinow theorem} asserts that geodesics not only exist locally, but globally, that is, they can be extended indefinitely taking $\epsilon = \infty$ giving $\deffun{\gamma : [0, \infty) -> \MM;}$. Furthermore, Hopf-Rinow adds that, under the same conditions, there exists a geodesic connecting any two given points. When the connection comes from a metric, geodesics are the locally length-minimizing curves on $\MM$.

Given a connection, we define the exponential map as $\exp_p(v) \defi \gamma_{p,v}(1)$ where $\gamma_{p,v}$ is the geodesic with initial conditions $(p, v)$. On a connected and complete Riemannian manifold, Hopf-Rinow says that the exponential map is defined in the whole tangent bundle.

A metric induces an isomorphism between the tangent and cotangent bundle $\deffun{\alpha : T\MM -> T^\ast\MM;}$ defined as $\alpha(X) \defi \gm(X, -)$. $\alpha$ is sometimes called the \emph{musical isomorphism}. The gradient of a function is defined as the vector field associated to the differential form $\dif f$ through this isomorphism $\grad f \defi \alpha^{-1}(\dif f)$. In other words, it is the vector field such that $\dif f = \gm(\grad f, -)$. As such, the gradient depends on the choice of metric. A metric also allows to define the adjoint of a differential $\deffun{\dif \phi : T_p\MM -> T_{\phi(p)}\MM;}$ at a point $p \in \MM$ as the application $\deffun{\dif \phi^\ast : T_{\phi(p)}\MM -> T_p\MM;}$ such that for every $X\in T_p \MM, Y \in T_{\phi(p)}\MM$ we have that $\gm(\dif \phi(X), Y)_{\phi(p)} = \gm(X, \dif \phi^\ast(Y))_p$.

\subsection{Lie groups}
A Lie group $G$ is a differentiable manifold equipped with a differentiable group structure. Lie groups have a distinguished tangent space called the \emph{Lie algebra}, which is the tangent space at the identity $\g \defi T_eG$. Any closed subgroup of a Lie group is itself a Lie group. A (real) \emph{matrix manifold} is a closed subgroup of the \emph{general linear group} $\GL{n} = \set{B \in \M{n} | \det A \neq 0}$. The Lie algebra of the general linear group is $\gl{n} = \M{n}$. In general, the general linear group of a vector space $V$  is the Lie group formed by the invertible automorphisms of $V$, $\GL{V}$.

On a Lie group, one has for every $g, x \in G$ the diffeomorphisms given by left translations $L_g(x) \defi gx$, right translations $R_g(x) \defi xg$, and conjugation $c_g(x) = gxg^{-1}$. Using left translations, one can identify any tangent space with the Lie algebra via the vector space isomorphism $\deffun{\pa{\dif L_{g^{-1}}}_g : T_gG -> \g;}$. The differential of the conjugation at the identity is called the \emph{adjoint representation of $G$}, $\deffun{\Ad : G -> \GL{\g};}$. The differential of $\Ad$ at the identity is the \emph{adjoint representation of $\g$}, $\deffun{\ad : \g -> \End{\g};}$. For matrix Lie groups, $\Ad_g(X) = gXg^{-1}$ and $\ad_X(Y) = [X, Y]$.

Given a vector $X \in \g$, we can consider the \emph{one parameter subgroup} with starting vector $X$, which is the unique group homomorphism $\deffun{\gamma_X : \RR -> G;}$ such that $\gamma_X'(0) = X$. The Lie exponential is then defined for every $X \in \g$ as $\exp(X) \defi \gamma_X(1)$. For matrix Lie groups, the Lie exponential is given by the exponential of matrices.

A Riemannian metric on a Lie group is said to be left (resp.\ right) invariant if it turns left (resp.\  right) translations into isometries. A metric is said to be bi-invariant if it is both left and right invariant. Every Lie group admits a left-invariant metric, given by choosing any inner product in $\g$ and pushing it forward using $L_{g^{-1}}$. Only compact Lie groups, commutative Lie groups, and products of them admit bi-invariant metrics. When a Lie group is equipped with a bi-invariant metric, the Lie exponential coincides with the Riemannian exponential at the identity.

\section{Parametrizations on Manifolds}\label{sec:parametrizations}
In this section we look at the problem of how does optimizing $f \circ \phi$ affect the optimization problem, depending on the properties of $\phi$. As a disclaimer we would like to mention that, although this section and next section are original, most of them would be considered routine in the field of differential geometry.

Consider the optimization problem
\begin{equation}\label{eq:3_min_problem}
    \min_{x \in \MM} f(x)
\end{equation}
where $\MM$ is a Riemannian manifold.
In this section we will look at parametrizations, which can be regarded as a generalization of certain trivializations, when the domain is not necessary $\RR^n$ but a Riemannian manifold.

Suppose that we have access to a diffeomorphism between Riemannian manifolds
\[
\deffun{\phi : \NN -> \MM;}.
\]
and denote the metric on $\NN$ as $\gm_2$. We say that $\phi$ is a \emph{parametrization of $\MM$ in terms of $\NN$}.

We can then consider the problem
\[
    \min_{y \in \NN} f(\phi(y)).
\]
In order to apply a first-order method to this new problem we first have to compute the gradient of this new function $f \circ \phi$. In order to do so, let us first define some notation.

Denote by $\dif\phi$ and $\dif\phi'$ the differential and its dual
\begin{align*}
    \deffun{\dif\phi &: T\NN -> T\MM;} \\
    \deffun{\dif\phi' &: T^\ast\MM -> T^\ast\NN;}
\end{align*}
and denote by $\alpha$ and $\beta$ the canonical isomorphisms between the tangent and the cotangent bundle induced by the metrics
\begin{align*}
    \alpha &\colon T\MM \overset{\iso}{\to} T^\ast\MM\\
    \beta &\colon T\NN \overset{\iso}{\to} T^\ast\NN.
\end{align*}
Finally, denote by $\dif\phi^\ast$ the fibre-wise adjoint with respect to the two metrics of $\dif\phi$
\[
    \deffun{\dif\phi^\ast : T\MM -> T\NN;}.
\]

\begin{proposition}\label{prop:3_dual_adjoint}
    Using the notation above, the following relation holds
    \[
    \beta \circ \dif\phi^\ast = \dif\phi' \circ \alpha.
    \]
\end{proposition}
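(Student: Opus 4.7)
The proof is essentially a fibrewise definition-chase, so the plan is to fix a point and evaluate both sides on a test vector.

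Fix $q \in \NN$, set $p = \phi(q)$, and let $X \in T_p\MM$. Both $\beta(\dif\phi^\ast(X))$ and $\dif\phi'(\alpha(X))$ are elements of $T_q^\ast\NN$, so to prove they are equal it suffices to check that they agree when applied to an arbitrary $Y \in T_q\NN$.

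For the left-hand side, by the definition of the musical isomorphism $\beta$ induced by $\gm_2$ we have
\[
    \beta\pa{\dif\phi^\ast(X)}(Y) = \gm_2\pa{\dif\phi^\ast(X),\,Y}_q.
\]
For the right-hand side, $\dif\phi'$ acts on covectors by precomposition with $\dif\phi$, and $\alpha$ is the musical isomorphism for $\gm$, so
\[
    \dif\phi'\pa{\alpha(X)}(Y) = \alpha(X)\pa{\dif\phi(Y)} = \gm\pa{X,\,\dif\phi(Y)}_p.
\]
By the very definition of $\dif\phi^\ast$ as the fibrewise adjoint of $\dif\phi$ with respect to the two metrics, these two scalars are equal. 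Since $q$ and $Y$ were arbitrary, this yields the identity of bundle maps $\beta \circ \dif\phi^\ast = \dif\phi' \circ \alpha$.

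There is essentially no obstacle here: every step is either the definition of one of the four maps involved or the basic property characterising the adjoint. The only mild care needed is bookkeeping of which metric lives on which manifold, which is why writing out the fibre at $q$ (with $p = \phi(q)$) and distinguishing $\gm$ from $\gm_2$ makes the argument completely transparent.
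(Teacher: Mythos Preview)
Your proof is correct and is essentially identical to the paper's own one-line argument: both evaluate the two sides on an arbitrary $X \in T_p\MM$ and $Y \in T_q\NN$, unwind the definitions of $\alpha$, $\beta$, and $\dif\phi'$, and then invoke the defining property of the adjoint $\dif\phi^\ast$. The only difference is that you are a bit more explicit about fixing fibres and distinguishing the two metrics, whereas the paper writes the chain of equalities in a single displayed line.
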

\begin{proof}
    For $Y \in T\NN, X \in T\MM$, we have that
    \[
        \pa{\dif\phi' \circ \alpha}(X)(Y) = \alpha(\dif\phi(Y))(X) = \beta\pa{\dif\phi^\ast(X)}\pa{Y} = \pa{\beta \circ \dif\phi^\ast}(X)(Y).\qedhere
    \]
\end{proof}

Using this proposition, we can compute the gradient with respect to the new parametrization.

\begin{corollary}\label{corol:3_grad_abstract}
    Let $\deffun{\phi : \NN -> \MM;}$ be a smooth map between Riemannian manifolds and $f$ be a function on $\MM$. We have that
    \[
        \grad\pa{f \circ \phi} = \dif\phi^\ast(\grad f).
    \]
\end{corollary}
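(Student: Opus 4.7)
The plan is to reduce the identity to \Cref{prop:3_dual_adjoint} by unwinding the definition of the gradient in terms of the musical isomorphism and applying the chain rule to $\dif(f \circ \phi)$. Since the gradient of any function is defined as the vector field associated to its differential through the metric isomorphism, \ie, $\grad g = \alpha^{-1}(\dif g)$ on $\MM$ and $\grad g = \beta^{-1}(\dif g)$ on $\NN$, everything reduces to relating $\dif(f \circ \phi)$ to $\dif f$ via $\dif\phi'$, and then applying the already-established adjoint identity.

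First I would write
\[
    \grad(f \circ \phi) = \beta^{-1}\pa{\dif(f \circ \phi)}
\]
using the definition of the gradient on $\NN$. Then, by the chain rule applied at an arbitrary point $p \in \NN$, $\dif(f \circ \phi)_p = (\dif f)_{\phi(p)} \circ (\dif \phi)_p$, which is exactly the statement that $\dif(f \circ \phi) = \dif\phi'(\dif f)$ as an element of $T^\ast\NN$. Substituting the identity $\dif f = \alpha(\grad f)$, this yields
\[
    \grad(f \circ \phi) = \beta^{-1}\pa{\dif\phi'\pa{\alpha(\grad f)}}.
\]

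The last step is to invoke \Cref{prop:3_dual_adjoint}, which gives $\dif\phi' \circ \alpha = \beta \circ \dif\phi^\ast$. Applying $\beta^{-1}$ on both sides and evaluating at $\grad f$ produces $\beta^{-1} \circ \dif\phi' \circ \alpha = \dif\phi^\ast$, so the expression above collapses to $\dif\phi^\ast(\grad f)$, as required. There is no real obstacle here: the work has already been done in \Cref{prop:3_dual_adjoint}, and this corollary is essentially a bookkeeping exercise that repackages the fibre-wise adjoint statement in the language of gradients. The only point deserving care is keeping track of which musical isomorphism ($\alpha$ on $\MM$ versus $\beta$ on $\NN$) is being used on each side of the equality, since the metrics are different and this is precisely where the dependence on the parametrization enters.
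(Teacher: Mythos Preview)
Your proposal is correct and follows essentially the same approach as the paper's proof: both unwind $\grad(f\circ\phi)$ as $\beta^{-1}(\dif(f\circ\phi))$, use the chain rule to rewrite this as $(\beta^{-1}\circ\dif\phi')(\dif f)$, and then invoke \Cref{prop:3_dual_adjoint} to identify the result with $\dif\phi^\ast(\grad f)$. The paper simply compresses your steps into a single chain of equalities.
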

\begin{proof}
    This is direct using the previous proposition since
    \[
        \grad\pa{f \circ \phi} := \beta^{-1}\pa{\dif \pa{f \circ \phi}} = \pa{\beta^{-1} \circ \dif\phi'}\pa{\dif f} = \dif\phi^\ast\pa{\grad f}.\qedhere
    \]
\end{proof}

This motivates the definition of the metric associated to a parametrization $\phi$.
\begin{definition}[Metric associated to a parametrization]\label{def:3_metric_parametrization}
    A parametrization between Riemannian manifolds $\deffun{\phi : \NN -> \MM;}$ induces a metric on $\MM$ as per
\[
    \pa{\phi_\ast \gm_2}(X_1, X_2)_p := \gm_2\pa{\dif\phi^\ast(X_1), \dif\phi^\ast(X_2)}_{\phi^{-1}(p)}\qquad \forall p \in \MM.
\]
\end{definition}

This is a metric since $\dif\phi^\ast(X) = 0$ if and only if $X=0$ by the inverse function theorem, given that $\phi$ is a diffeomorphism.

Another way of looking at this construction is through the lens of submersions.
\begin{definition}[Riemannian Submersion]
    A Riemannian submersion is a surjective map $\deffun{\phi : \NN -> \MM;}$ such that its differential is surjective at every point and
    \[
        \deffun{\dif \phi : \pa{\ker\pa{\dif \phi}}^\perp  -> T\MM;}
    \]
    is an isometry.
\end{definition}
This is equivalent to saying that the adjoint $\dif \phi^\ast$ should be an isometry. This is exactly the construction that we are using, we take the metric that converts $\phi$ into a Riemannian submersion.

We now look at this new metric. We will prove that doing gradient descent using a retraction along $\phi_\ast \gm_2$, is not a retraction with respect to $\gm_2$, and hence, it constitutes an optimization method fundamentally different to the original Riemannian gradient descent.

Using this metric, gradient descent on $\MM$ with a step-size $\eta > 0$ is given by the map
\[
    y_{t+1} = \pa{\phi \circ \exp_{\NN, \gm_2} \circ \dif\phi^\ast}\pa{-\eta\grad f(y_t)}
\]
where $\deffun{\exp_{\NN, \gm_2} : T\NN -> \NN;}$ is the Riemannian exponential map on $(\NN, \gm_2)$. Note that since $\grad f = \alpha^{-1} \circ \dif f$, this step does not depend on the initial metric on $\MM$, as we already observed in the proof of~\Cref{corol:3_grad_abstract}.

More generally, recall the definition of a retraction.
\begin{definition}[Retraction]
    A differentiable map $\deffun{r : T\NN -> \NN;}$ is called a retraction if for every $p \in \NN$
    \[
        r_p(0) = p \qquad\text{and}\qquad \pa{\dif r_p}_0 = \Id.
    \]
    In other words, $r$ is an order one approximation to the Riemannian exponential.
\end{definition}

As proved in~\citet{boumal2016global}, under Lipschitzness conditions, it is enough to follow retractions rather than the exponential map in order to achieve convergence to a local minimum with Riemannian gradient descent. As such, a natural question to ask is whether the function that defines the update step defines a retraction.

\begin{proposition}\label{prop:3_retraction_step}
    Let $\pa{\MM, \gm_1}, \pa{\NN, \gm_2}$ be Riemannian manifolds. Let $\phi$ be a parametrization between them and let $\deffun{r : T\NN -> \NN;}$ be a retraction. The map
    \[
        \deffun{\phi_\ast r \defi \phi \circ r \circ \dif\phi^\ast : T\MM -> \MM;}
    \]
    is a retraction if and only if $\phi$ is a local isometry.
\end{proposition}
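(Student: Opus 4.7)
The plan is to verify the two retraction axioms for $\phi_\ast r$ at an arbitrary $p \in \MM$, isolate where the hypothesis on $\phi$ enters, and then recognize the resulting linear-algebraic condition as the definition of a local isometry. Throughout, set $q \defi \phi^{-1}(p)$, which is defined since $\phi$ is a diffeomorphism.

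First I would dispose of the base-point axiom, which is automatic: for any diffeomorphism $\phi$ and any retraction $r$, linearity of $\pa{\dif\phi^\ast}_p$ on the fibre gives $(\phi_\ast r)_p(0) = \phi\pa{r_q\pa{\dif\phi^\ast_p(0)}} = \phi(r_q(0)) = \phi(q) = p$. The nontrivial axiom is $\pa{\dif(\phi_\ast r)_p}_0 = \Id_{T_p\MM}$. Writing $(\phi_\ast r)_p = \phi \circ r_q \circ \pa{\dif\phi^\ast}_p$ and applying the chain rule at $0 \in T_p\MM$, together with the retraction property $(\dif r_q)_0 = \Id_{T_q\NN}$,
\[
\pa{\dif(\phi_\ast r)_p}_0 = (\dif\phi)_q \circ (\dif r_q)_0 \circ \pa{\dif\phi^\ast}_p = (\dif\phi)_q \circ \pa{\dif\phi^\ast}_p.
\]
Hence $\phi_\ast r$ is a retraction if and only if $(\dif\phi)_q \circ \pa{\dif\phi^\ast}_p = \Id_{T_p\MM}$ for every $p \in \MM$.

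The final step is to identify this fibre-wise identity as the definition of $\phi$ being a local isometry. Since $\phi$ is a diffeomorphism, $(\dif\phi)_q$ is a linear isomorphism between the finite-dimensional inner product spaces $(T_q\NN, \gm_2)$ and $(T_p\MM, \gm_1)$, and $\pa{\dif\phi^\ast}_p$ is its adjoint. By the standard characterization, a linear isomorphism between inner product spaces preserves the inner product if and only if its adjoint coincides with its inverse; using invertibility of $(\dif\phi)_q$, this is equivalent to $(\dif\phi)_q \circ \pa{\dif\phi^\ast}_p = \Id_{T_p\MM}$. Quantifying over $p \in \MM$ yields the statement that $\phi$ is a local isometry, closing both directions of the equivalence.

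I do not anticipate a serious obstacle; the only point requiring care is the last equivalence, where one has to use that $(\dif\phi)_q$ is a bijection—supplied by $\phi$ being a diffeomorphism—in order to upgrade the one-sided identity $\dif\phi \circ \dif\phi^\ast = \Id$ to $\dif\phi^\ast = (\dif\phi)^{-1}$, and thereby conclude that $(\dif\phi)_q$ preserves inner products.
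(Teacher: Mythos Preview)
Your proposal is correct and follows essentially the same approach as the paper: verify the base-point axiom trivially, reduce the differential axiom via the chain rule to $\dif\phi \circ \dif\phi^\ast = \Id$, and then identify this (using invertibility of $\dif\phi$) with the local-isometry condition. The only cosmetic difference is that the paper spells out the last equivalence by composing to obtain $\dif\phi^\ast \circ \dif\phi = \Id$ and then evaluating in the metric, whereas you invoke the standard linear-algebra fact directly.
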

\begin{proof}
    It is clear that $\pa{\phi_\ast r}_p(0) = p$. For the second condition, differentiating, we have that the map is a retraction if and only if
    \[
        \dif\phi \circ \dif\phi^\ast = \Id_{T_p\MM}.
    \]
    or equivalently $\dif\phi^{-1} = \dif\phi^\ast$. Now,
    \[
        \pa{\dif\phi \circ \dif\phi^\ast \circ \dif\phi \circ \dif\phi^\ast}_p = \Id_{T_p\MM}
    \]
    so
    \[
        \pa{\dif\phi^\ast \circ \dif\phi}_{\phi^{-1}(p)} = \pa{\dif\phi^{-1} \circ \dif\phi^\ast}_{\phi^{-1}(p)} = \Id_{T_{\phi^{-1}(p)}\NN}.
    \]

    Finally, since $\dif\phi^\ast$ is the adjoint operator with respect to the metrics $\gm_2$ and $\gm_1$, evaluating this last expression on two points using the metric
    \[
        \gm_1\pa{\dif\phi(u), \dif\phi(v)} = \gm_2(u, v) \qquad \forall u, v \in T_{\phi^{-1}(p)} \NN,
    \]
    which is equivalent to $\phi$ being a local isometry.
\end{proof}

This is not a surprising result, since a retraction is a map that preserves the gradient. The way we have defined $\phi_\ast r$ is such that it preserves the gradient with respect to $\gm_2$. If it also preserved the gradient with respect to $\gm_1$, that would mean that the gradients with respect to the two metrics are the same, modulo a transformation through $\dif\phi^\ast$, in other words, $\dif\phi$ should be a local isometry.

\section{Proof of~\Cref{thm:lie_exponential}}\label{sec:proof}
In this section we generalize to general matrix Lie groups the classic proof presented in Theorem D.$2.$ in~\citet{lezcano2019cheap}.

In order to generalize this proof, we need the following theorem.
\begin{theorem}[Theorem $4$ in~\citet{hille1958roots}]
    Let $A, B \in \CC^{n \times n}$. If there are no two eigenvalues in $A$ such that their difference is of the form $2n\pi i$ for $n > 0$ and, if $e^A = e^B$, we have that $AB = BA$.
\end{theorem}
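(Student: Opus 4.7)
The plan is to reduce the problem to a purely local statement on each generalized eigenspace of $A$, and then exploit the fact that, applied to a nilpotent matrix, the exponential is a unipotent polynomial whose inverse (the logarithm) is again a polynomial. First, I would observe that $B$ commutes with $e^B$ trivially, hence with $e^A$. Thus it suffices to prove the following strengthened statement: if $B$ commutes with $e^A$ and $A$ satisfies the spectral separation hypothesis, then $B$ commutes with $A$.

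The next step is to identify the generalized eigenspace decompositions of $A$ and of $e^A$. Decompose $\CC^n = \bigoplus_i V_i$ into the generalized eigenspaces of $A$, on which $A|_{V_i} = \lambda_i \Id + N_i$ with $N_i$ nilpotent. Each $V_i$ is $A$-invariant, hence $e^A$-invariant, and on $V_i$ the matrix $e^A = e^{\lambda_i} e^{N_i}$ has the unique eigenvalue $e^{\lambda_i}$. The hypothesis on $A$ is exactly what guarantees that the scalars $e^{\lambda_i}$ are pairwise distinct, so the $V_i$ must be precisely the generalized eigenspaces of $e^A$. Consequently, any matrix commuting with $e^A$---in particular $B$---preserves each $V_i$.

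The final step is purely local on a single $V_i$. Writing $B_i \defi B|_{V_i}$, we have that $B_i$ commutes with $e^{\lambda_i} e^{N_i}$, hence with $e^{N_i}$. Since $N_i$ is nilpotent, so is $e^{N_i} - \Id$, so the power series $\log(\Id + X) = X - X^2/2 + \cdots$ applied to $e^{N_i} - \Id$ terminates after finitely many terms and, by the usual inversion of formal power series, recovers $N_i$ as a polynomial in $e^{N_i}$. Hence $B_i$ commutes with $N_i$, and trivially with $\lambda_i \Id$, so $B_i$ commutes with $A|_{V_i}$. Assembling these local commutativities across the direct sum yields $AB = BA$.

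The delicate point is the identification of the generalized eigenspace decompositions of $A$ and of $e^A$, as this is where the spectral separation hypothesis is used in an essential way. Without it, distinct eigenvalues $\lambda_i, \lambda_j$ of $A$ with $\lambda_i - \lambda_j \in 2\pi i\, \mathbb{Z}\setminus\{0\}$ would coalesce into a single eigenvalue $e^{\lambda_i} = e^{\lambda_j}$ of $e^A$, and the corresponding generalized eigenspaces of $A$ would merge into a larger generalized eigenspace of $e^A$. A matrix $B$ commuting with $e^A$ could then act non-trivially across $V_i$ and $V_j$ while still respecting $e^A$, breaking commutativity with $A$; this is exactly the obstacle that the hypothesis rules out.
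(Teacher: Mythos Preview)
The paper does not give its own proof of this statement; it is quoted verbatim as an external result (Theorem~4 of \citet{hille1958roots}) and then used as a black box in the proof of the Lie-exponential theorem. So there is no ``paper's proof'' to compare against.

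That said, your argument is correct and is essentially the classical one. The reduction ``$B$ commutes with $e^B=e^A$, so it suffices that commuting with $e^A$ forces commuting with $A$'' is the right move. The identification of the generalized eigenspace decompositions of $A$ and $e^A$ under the spectral hypothesis is the crux, and you state it accurately: on $V_i$ the restriction $e^A|_{V_i}=e^{\lambda_i}e^{N_i}$ has the single eigenvalue $e^{\lambda_i}$, and the hypothesis makes the $e^{\lambda_i}$ pairwise distinct, so the two decompositions coincide. From there, the fact that a matrix commuting with $e^A$ preserves each $V_i$ is standard (the spectral projectors of $e^A$ are polynomials in $e^A$), and the local step---recovering $N_i$ as a terminating $\log$-series in $e^{N_i}-\I$---is exactly right. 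Your closing paragraph correctly isolates where the hypothesis is used and why it cannot be dropped.

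One small stylistic point: in the strengthened statement you prove, the assumption $e^A=e^B$ is never used beyond giving $Be^A=e^AB$, so you have in fact proved the slightly stronger fact that any $B$ commuting with $e^A$ already commutes with $A$ under the spectral condition on $A$. That is fine, and indeed is how the result is usually stated.
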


With this theorem in hand we can prove the following strengthened result.
\begin{theorem}[Properties of the Lie exponential]
    Let $G$ be a closed subgroup of $\GL{n, \CC}$, the Lie exponential is a diffeomorphism on the set $U = \set{A \in \g | \abs{\Im\pa{\lambda_i(A)}} < \pi}$
    with $\lambda_i(A)$ the eigenvalues of $A$.
\end{theorem}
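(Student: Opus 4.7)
The plan is to establish two things independently: first, that $\exp|_U$ is globally injective, using the Hille theorem just stated; second, that $(\dif \exp)_A$ is invertible at every $A \in U$, via the standard formula for the differential of the matrix exponential. Together these give an injective local diffeomorphism, which is automatically a diffeomorphism from $U$ onto the open image $\exp(U) \subset G$. I would also note at the outset that $U$ is open in $\g$, since the eigenvalues of $A$ depend continuously on $A$ (as roots of the characteristic polynomial) and $U$ is cut out by strict inequalities on their imaginary parts.

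For injectivity, suppose $A, B \in U$ with $e^A = e^B$. The eigenvalues of $A$ all have imaginary part in $(-\pi, \pi)$, so any two of them differ by a complex number with imaginary part in $(-2\pi, 2\pi)$, which is never of the form $2n\pi i$ with $n \neq 0$; hence Hille's hypothesis is met and $AB = BA$. Commuting complex matrices are simultaneously upper-triangularizable, so after conjugating by a common $P$, the diagonals of $PAP^{-1}$ and $PBP^{-1}$ list the eigenvalues $\lambda_j(A)$ and $\lambda_j(B)$ in a matching order; comparing diagonals of $e^A = e^B$ yields $e^{\lambda_j(A)} = e^{\lambda_j(B)}$, and the bound $|\Im(\lambda_j(B) - \lambda_j(A))| < 2\pi$ forces $\lambda_j(A) = \lambda_j(B)$. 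Then $C \defi B - A$ commutes with $A$, has zero diagonal in the common triangularization, and is therefore nilpotent; from $e^{A+C} = e^A e^C = e^A$ we obtain $e^C = I$, and expanding the resulting finite series and factoring as $C(I + C/2 + \cdots) = 0$ with the second factor invertible (identity plus nilpotent) gives $C = 0$, so $A = B$.

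For the local diffeomorphism step, I would invoke the standard formula
\[
(\dif \exp)_A(H) = e^A \cdot \psi(\ad_A)(H), \qquad \psi(z) \defi \frac{1 - e^{-z}}{z},
\]
valid in $\gl{n, \CC}$. The operator $\psi(\ad_A)$ on $\gl{n, \CC}$ fails to be invertible exactly when $\ad_A$ has an eigenvalue at a nonzero zero of $\psi$, namely at $2\pi i k$ for $k \in \mathbb{Z} \setminus \{0\}$. The spectrum of $\ad_A$ consists of the differences $\lambda_i(A) - \lambda_j(A)$, and for $A \in U$ all such differences have imaginary part in $(-2\pi, 2\pi)$, so none is a nonzero integer multiple of $2\pi i$. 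Thus $\psi(\ad_A)$ is invertible on $\gl{n, \CC}$; since $\g$ is closed under the bracket, $\psi(\ad_A)$ restricts to an injective endomorphism of $\g$, hence bijective by a dimension count, so $(\dif \exp)_A \colon \g \to T_{e^A} G$ is invertible. By the inverse function theorem, $\exp$ is a local diffeomorphism on $U$, and combined with injectivity this proves the theorem. I expect the subtlest part to be the reduction in the injectivity argument from $e^A = e^B$ to $A = B$ after Hille produces commutativity, since it requires chaining together simultaneous triangularization, the imaginary-part bound, and the nilpotent case in the right order.
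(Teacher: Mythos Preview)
Your proof is correct and follows essentially the same route as the paper: Hille's theorem to obtain commutativity, reduction to showing that a nilpotent matrix with $e^C = I$ must vanish, and the classical invertibility of $(\dif\exp)_A$ via the eigenvalues of $\ad_A$. The paper cites Rossmann for the differential step whereas you spell out the $\psi(\ad_A)$ argument, and for the nilpotent endgame the paper appeals to the convergent logarithm series on unipotents while you factor $C(I + C/2! + \cdots) = 0$; these are cosmetic differences, not a genuinely different strategy.
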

\begin{proof}
    The fact that the differential of the exponential is surjective on this domain is classic (\cf Section $1$, Proposition $7$ in~\citet{rossmann2006lie}). As such, we just have to prove that the exponential is injective on this domain.

    If $A \in U$ is diagonalizable, $A = C\Sigma C^{-1}$ with $\Sigma$ diagonal, and $\exp(A) = C\exp(\Sigma)C^{-1}$ where $\exp(\Sigma)$ is just the element-wise exponential of the diagonal elements.

    By~\citet{hille1958roots}, for any two matrices $A, B \in U$, if $e^A = e^B$ we have that $AB = BA$.
    In particular, as they commute, we have that $e^A e^{-B} = e^{A - B} = \I$.

    As $A, B \in U$, we have that $\Im\pa{\lambda_i\pa{A - B}} < 2 \pi$, and as the eigenvalues of $e^{A-B}$ are $1$, and the eigenvalues of the exponential of a matrix is the exponential of its eigenvalues, we have that the eigenvalues of $A-B$ are all zero. Putting it in Jordan-normal form, we can assume that $A-B$ is upper triangular so, as the eigenvalues of $A-B$ are zero, we can assume that $A-B$ is also nilpotent.

    Now, if we prove that the only upper triangular nilpotent matrix that is mapped to the identity matrix under the exponential is the null matrix, we finish the proof, as this would imply that $A = B$.

    The set of upper-triangular nilpotent matrices is the Lie algebra of the Lie group of upper triangular matrices with ones on the diagonal. Recall the formula for the logarithm
    \[
        \log(B) = \sum_{k=1}^\infty (-1)^{k+1}\frac{\pa{B-\I}^k}{k}.
    \]
    Whenever $B$ is upper triangular with ones on the diagonal, $B-\I$ is nilpotent, so the series converges. As such, all these matrices have one and just one logarithm in $U$. In particular, the exponential is a bijection on this set.
\end{proof}

\section{Gradient of the Matrix Exponential}\label{sec:matrix_grad}
In this section we give a formula for the gradient of the pullback of a function by the matrix exponential. The implementation of these formulas in practice and how can they be applied on different manifolds is considered in~\Cref{sec:examples}.

We will prove a stronger result, which also applies to other matrix functions like $\cos(X)$, $\sin(X)$ and, with minor modifications, to functions like $\sqrt{X}$, $X^{1/n}$, and $\log(X)$.\footnote{See the remark after the proof of the theorem.}

\begin{theorem}\label{thm:analytic}
    Consider a real analytic function
    \[
        \deffun{\phi : \RR -> \RR; x -> \sum_{n=0}^\infty \frac{a_n}{n!}x^n}
    \]
    with associated matrix function
    \[
        \deffun{\phi : \M{n} -> \M{n}; X -> \sum_{n=0}^\infty \frac{a_n}{n!}X^n}
    \]

    We then have that, for the canonical inner product $\pa{A_1, A_2} = \tr\pa{\trans{A_1}A_2}$,
    \[
        \pa{\dif \phi}^\ast_X = \pa{\dif \phi}_{\trans{X}} \qquad X \in \M{n}.
    \]
\end{theorem}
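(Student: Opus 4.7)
The plan is to expand $\phi$ as its power series, differentiate term by term using the Leibniz rule for the power map $X \mapsto X^n$, and then read off the adjoint by repeatedly applying the cyclicity of the trace together with the identity $(X^k)^{\transaux} = (\trans{X})^k$.

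More concretely, I would first compute the differential of the $n$-th power map. A straightforward induction (or direct Leibniz expansion) gives
\[
    \pa{\dif\, X^n}_X(H) = \sum_{k=0}^{n-1} X^k H X^{n-1-k}.
\]
Since $\phi$ is analytic with radius of convergence $\rho$, the matrix series $\sum_n \frac{a_n}{n!} X^n$ converges absolutely on $\set{X | \norm{X} < \rho}$ (operator norm), and its term-by-term derivatives also converge absolutely on the same set. Therefore, differentiation and summation can be exchanged and
\[
    \pa{\dif \phi}_X(H) = \sum_{n=0}^\infty \frac{a_n}{n!} \sum_{k=0}^{n-1} X^k H X^{n-1-k}.
\]

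Next I would compute the adjoint by testing against an arbitrary $K \in \M{n}$ under the Frobenius inner product. Using $\tr((X^k H X^{n-1-k})^{\transaux} K) = \tr(X^{(n-1-k)\transaux} H^{\transaux} X^{k\transaux} K)$ and the cyclicity of the trace, this equals $\tr(H^{\transaux} X^{k\transaux} K X^{(n-1-k)\transaux})$. Hence
\[
    \scalar{\pa{\dif \phi}_X(H), K} = \scalar[\bigg]{H, \sum_{n=0}^\infty \frac{a_n}{n!} \sum_{k=0}^{n-1} (\trans{X})^k K (\trans{X})^{n-1-k}},
\]
where I used $X^{k\transaux} = (\trans{X})^k$. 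The right-hand sum is precisely the series expression for $\pa{\dif \phi}_{\trans{X}}(K)$ derived above, so $\pa{\dif \phi}_X^\ast(K) = \pa{\dif \phi}_{\trans{X}}(K)$, which is the claim.

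The only subtle point — and the main place to be careful — is the justification of swapping differentiation and the infinite sum, together with the passage from the finite-dimensional computation $\pa{\dif\, X^n}_X(H) = \sum_k X^k H X^{n-1-k}$ to the full series; everything else is algebraic manipulation with trace cyclicity and transposition. Once term-wise convergence in operator norm is noted (with the tail bounded by $\sum_{n \geq N} \frac{|a_n|}{n!} n \norm{X}^{n-1} \norm{H}$, which is finite on any compact subset of the disk of convergence), the identity extends to all of $\M{n}$ by analytic continuation for the matrix exponential and the other entire functions $\cos, \sin$, and locally around suitable base points for $\sqrt{\cdot}, \log$, etc., as indicated in the remark alluded to by the author.
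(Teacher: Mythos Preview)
Your proof is correct and follows essentially the same approach as the paper: expand $\phi$ as a power series, differentiate term by term to get $\pa{\dif \phi}_X(H) = \sum_n \frac{a_n}{n!}\sum_k X^k H X^{n-1-k}$, and compute the adjoint of each summand via trace cyclicity and $(X^k)^{\transaux} = (\trans{X})^k$. The only cosmetic difference is that the paper packages the trace manipulation as the mini-lemma $L_A^\ast = L_{\trans{A}}$, $R_A^\ast = R_{\trans{A}}$ and then invokes $(f\circ g)^\ast = g^\ast \circ f^\ast$, whereas you unroll that computation directly; your version is also more explicit about the convergence justification, which the paper leaves implicit.
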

\begin{proof}
    We can compute the differential of $\phi$ as
    \[
        \pa{\dif \phi}_X(E) = \sum_{n=0}^\infty \pa[\Big]{\frac{a_n}{n!} \sum_{i=0}^n X^iEX^{n-i}}.
    \]
    By linearity, it is enough to compute the adjoint of functions of the form $X \mapsto X^iEX^{n-i}$.

    Observe that the adjoint of the left multiplication $L_A(X) = AX$ is exactly $L_{\trans{A}}$
    \[
        \scalar{L_A(X), Y} \defi \tr\pa{\trans{\pa{AX}}Y} = \tr\pa{\trans{X}\trans{A}Y} = \scalar{X, L_{\trans{A}}(Y)}.
    \]
    In the case of right multiplication, we also get $R^\ast_A = R_{\trans{A}}$.

    Finally, we just have to apply this formula to the functions $L_{X^i}(E) = X^iE$ and $R_{X^{n-i}}(E) = EX^{n-i}$, and noting that $X \mapsto X^i E X^{n-i} = L_{X^i}(R_{X^{n-i}}(E))$, and that for any two functions, $(f \circ g)^\ast = g^\ast \circ f^\ast$, we get the result.
\end{proof}

After obtaining this more general result, we thought that this should be folklore in some areas of functional analysis and numerical analysis. In fact, this result can be found without proof in~\citet[p.66]{higham2008functions}.

\begin{remark}
    The generalization of this result to functions complex functions is direct, modulo computing the differential of the analytic function with conjugate coefficients in its Taylor series. In this case, one can interpret this theorem by saying that ``the adjoint of the differential is the differential of the adjoint at the adjoint'', noting the two different meanings of the word \emph{adjoint} in the sentence.

    In the complex setting, one can formulate the theorem for a holomorphic function defined just on an open subset $U \subset \CC$, and define the function on matrices on the set of matrices such that their spectrum is contained in $U$, hence making sense also of functions like $\log(X)$.

    The result still holds true for many other inner product in $\CC^{n \times n}$ (or $\M{n}$), in particular, for those for which for every matrix $X$ there exists a matrix $Y$ such that $L^\ast_X = L_Y$. If this is the case, we write $X^\ast \defi Y$ and the theorem still holds true, as in this case, $R^\ast_X = R_{X^\ast}$. Most of the scalar products on matrix spaces that appear in differential geometry have this property. For example, if we have a symmetric positive definite matrix $G \in \M{n}$ and we define the following product $\scalar{X, Y} \defi \tr\pa{\trans{X}GY}$, then we have that $X^\ast = \trans{\pa{GXG^{-1}}}$.
\end{remark}

We can now state the case of $\exp(X)$ as a corollary of~\Cref{thm:analytic} and~\Cref{corol:3_grad_abstract}.
\begin{corollary}[Gradient of the matrix parametrization]\label{corol:grad_exp_proof}
    Let $\deffun{f : \GL{n} -> \RR;}$ be a smooth function, the gradient of $f \circ \exp$ at a matrix $A \in \gl{n} \iso \M{n}$ with respect to the canonical metric at a matrix $B \in\GL{n}$, $\scalar{A_1, A_2}_B = \tr\pa{\trans{A_1}A_2}$ is given by
    \[
        \grad \pa{f \circ \exp}\pa{A} = \pa{\dif \exp}_{\trans{A}}\pa{\grad f(e^A)}.
    \]
\end{corollary}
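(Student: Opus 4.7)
The plan is simply to chain together the two cited results. Corollary~\ref{corol:3_grad_abstract} gives the abstract formula $\grad(f \circ \phi) = \dif \phi^\ast(\grad f)$ for any smooth map between Riemannian manifolds, and Theorem~\ref{thm:analytic} evaluates the fibrewise adjoint of $\dif \phi$ in terms of $\dif \phi$ itself whenever $\phi$ comes from a real analytic scalar function. Instantiating both with $\phi = \exp$ should immediately yield the corollary.

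More concretely, I would first apply Corollary~\ref{corol:3_grad_abstract} with $\NN = \gl{n} \iso \M{n}$ and $\MM = \GL{n} \subset \M{n}$, both equipped with the canonical Frobenius inner product $\scalar{A_1, A_2} = \tr(\trans{A_1} A_2)$ on every tangent space. Under these identifications all tangent spaces are just $\M{n}$, and the corollary gives
\[
    \grad(f \circ \exp)(A) = \pa{\dif \exp}^\ast_A\pa{\grad f(e^A)},
\]
where $\pa{\dif \exp}^\ast_A \colon \M{n} \to \M{n}$ is the adjoint of $\pa{\dif \exp}_A$ with respect to the Frobenius product.

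Next I would apply Theorem~\ref{thm:analytic} with the entire function $\phi(x) = \exp(x) = \sum_{n \geq 0} x^n/n!$, so that $a_n = 1$ for all $n$ and the series converges on all of $\M{n}$. The theorem then produces the pointwise identity
\[
    \pa{\dif \exp}^\ast_A = \pa{\dif \exp}_{\trans{A}},
\]
which, substituted into the previous display, yields the stated formula.

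The only point requiring a small remark is that Corollary~\ref{corol:3_grad_abstract} was stated for parametrizations, which were diffeomorphisms, whereas here $\exp$ need not be a diffeomorphism globally on $\gl{n}$. This is not an obstacle: the proof of that corollary only uses the chain rule and the defining identity $\beta \circ \dif\phi^\ast = \dif\phi' \circ \alpha$ from Proposition~\ref{prop:3_dual_adjoint}, both of which require $\phi$ to be no more than smooth. Hence the identity holds at every $A \in \gl{n}$, without restricting to the domain of Theorem~\ref{thm:lie_exponential}, which is precisely what is needed for the gradient-based optimization algorithms of Section~\ref{sec:impl}.
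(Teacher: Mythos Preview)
Your proposal is correct and follows exactly the route the paper indicates: the corollary is stated there simply as a consequence of Theorem~\ref{thm:analytic} and Corollary~\ref{corol:3_grad_abstract}, and you have spelled out precisely that chain. Your additional remark that Corollary~\ref{corol:3_grad_abstract} only requires smoothness (not that $\phi$ be a diffeomorphism) is a valid and helpful clarification.
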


Using the chain rule, we can also compute the gradient with respect to the dynamic Lie trivialization $\exp_B$.
\begin{corollary}\label{corol:gl}
    Let $\deffun{f : \GL{n} -> \RR;}$ be a smooth function, and let $B \in \GL{n}$. The gradient of $f \circ \exp_B$ at a matrix $A \in T_B\GL{n} \iso \gl{n} \iso \M{n}$ with respect to the canonical metric $\scalar{A_1, A_2}_B = \tr\pa{\trans{A_1}A_2}$ is given by
    \[
        \grad \pa{f \circ\exp_B}\pa{A} = \trans{\pa{B^{-1}}} \pa{\dif \exp}_{\trans{\pa{B^{-1}A}}}\pa{
        \trans{B}\grad f(\exp_B(A))}.
    \]
\end{corollary}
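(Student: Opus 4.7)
The plan is to apply the chain rule to the decomposition $\exp_B = L_B \circ \exp \circ L_{B^{-1}}$, where $L_C$ denotes left multiplication by $C$, viewed as a linear map $\M{n} \to \M{n}$, and then take the fiber-wise adjoint with respect to the canonical Frobenius inner product so that Corollary \ref{corol:3_grad_abstract} can be invoked. This is essentially a bookkeeping exercise combining Corollary \ref{corol:grad_exp_proof} with the easy behavior of left translation under the adjoint.

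First I would compute the differential. Since $L_C$ is linear, $(\dif L_C)_X = L_C$ at every point $X$. By the chain rule,
\[
    \pa{\dif \exp_B}_A = L_B \circ \pa{\dif \exp}_{B^{-1}A} \circ L_{B^{-1}},
\]
as an endomorphism of $\M{n}$ (identifying $T_B\GL{n}$ and $T_{\exp_B(A)}\GL{n}$ with $\M{n}$ in the standard way). Next I would take adjoints with respect to $\scalar{A_1,A_2} = \tr(\trans{A_1} A_2)$. As shown in the proof of Theorem \ref{thm:analytic}, the adjoint of $L_C$ is $L_{\trans{C}}$, and by Theorem \ref{thm:analytic} itself, $\pa{\dif\exp}^\ast_X = \pa{\dif\exp}_{\trans{X}}$. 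Using $(f\circ g)^\ast = g^\ast \circ f^\ast$ and the identity $\trans{(B^{-1}A)} = \trans{A}\trans{(B^{-1})}$, this yields
\[
    \pa{\dif \exp_B}^\ast_A = L_{\trans{(B^{-1})}} \circ \pa{\dif \exp}_{\trans{(B^{-1}A)}} \circ L_{\trans{B}}.
\]

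Finally I would apply Corollary \ref{corol:3_grad_abstract} to $\phi = \exp_B$ (which is a local diffeomorphism from $T_B\GL{n}$ onto a neighborhood of $B$ in $\GL{n}$ on the domain described in Theorem \ref{thm:lie_exponential}), giving
\[
    \grad \pa{f \circ \exp_B}(A) = \pa{\dif \exp_B}^\ast_A\pa{\grad f(\exp_B(A))},
\]
into which the expression above for the adjoint substitutes directly to produce the stated formula.

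There is no real obstacle here: the only subtlety is to keep consistent the identifications $T_B\GL{n} \cong \gl{n} \cong \M{n}$ so that the inner products used in defining the adjoint match the canonical one on $\M{n}$, and to remember that under the canonical metric the adjoint of left multiplication is left multiplication by the transpose (not the inverse transpose). Everything else is a direct application of results proved earlier in the paper.
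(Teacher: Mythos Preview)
Your proposal is correct and is exactly the approach the paper takes: the paper's proof is simply ``Using the chain rule, we can also compute the gradient with respect to the dynamic Lie trivialization $\exp_B$,'' and you have spelled out that chain-rule computation in detail, combining \Cref{corol:3_grad_abstract}, \Cref{thm:analytic}, and the adjoint of left multiplication. One minor remark: \Cref{corol:3_grad_abstract} holds for any smooth map, so your parenthetical about $\exp_B$ being a local diffeomorphism on the domain of \Cref{thm:lie_exponential} is unnecessary here.
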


\begin{remark}
    These two corollaries still hold if we replace $\GL{n}$ by any real matrix Lie group with this metric. The complex case is analogous.
\end{remark}

\begin{remark}
    In~\cite{lezcano2019cheap} the following slightly different formula for the gradient of the exponential is derived for compact real matrix Lie groups:
    \[
        \grad\pa{f \circ \exp}(A) = e^A\pa{\dif \exp}_{-A}\pa{e^{-A}\grad f(\exp(A))}.
    \]
    This formula agrees with the one presented here, as it turns out that multiplication by $e^A$ commutes with $\pa{\dif \exp}_{-A}$. This can be seen, for example, modifying the proof of formula for the derivative of exponential map in~\citet[Chapter 1, Theorem 5]{rossmann2006lie} to obtain
    \[
        \pa{\dif \exp}_A(X) = \sum_{k=0}^\infty \frac{\pa{-\ad_A}^k}{(k+1)!}(e^A X).
    \]
    Finally, if $G$ is a real compact matrix Lie group together with a bi-invariant metric, one has that for every $A \in \g$, $A^\ast = -A$, where $A^\ast$ should be understood in the sense of $L^\ast_{A} = L_{A^\ast}$. This can be seen, for example, considering that a real compact matrix Lie group is either a subgroup of the orthogonal group or a conjugate of one. Using this, we finally see that the formula presented in~\cite{lezcano2019cheap} is equivalent to~\Cref{corol:grad_exp_proof}.
\end{remark}

\section{Examples of Matrix Manifolds and Specific Trivializations}\label{sec:examples}
This section has an expository purpose. It is intended as a compilation of useful results for the implementation of different trivializations. We will go over the forms that the Lie exponential and the Riemannian exponential---geodesics---take in different manifolds that are useful in the field of machine learning.

We will deliberately develop as least theory as possible, but we will still point out the relevant literature sources as \emph{remarks}, for those interested in the theoretical background. At the end, we will also describe some retractions, which are useful for problems on which computing the geodesics or the Lie exponential is too expensive.

We will put as examples some Lie groups, the sphere and the hyperbolic space, the Stiefel manifold, and the space of symmetric positive definite matrices.

\begin{remark}
    On some of the manifolds considered below, the metric is not the canonical one given by $\scalar{A_1, A_2}_B = \tr\pa{\trans{A_1}A_2}$, but often a left-translation of this one of the form
    \[
    \scalar{A_1, A_2}_B = \tr\pa{\trans{\pa{B^{-1}A_1}}B^{-1}A_2} \qquad \forall A_1, A_2 \in T_B\MM.
    \]
For these metrics, when we compute the gradient, we cannot use~\Cref{corol:grad_exp_proof} directly. On the other hand, after a similar reasoning, we get that the differential with respect to these metrics is given by the formula
\[
    \grad \pa{f \circ \exp}\pa{A} = B\pa{\dif \exp}_{\trans{A}}\pa{B^{-1} \grad f(e^A)}.
\]
We can also deduce this formula just noting that, for these metrics, left translations are isometries by construction.
\end{remark}

\subsection{Compact matrix Lie groups}\label{sec:compact}
On a Lie group, we can identify all the tangent spaces using left multiplication. In particular, we have that
\[
    T_BG = \set{ BA | A \in \g} \qquad B \in G
\]
where $\g \defi T_e G$ is the tangent space at the identity: the Lie algebra of $G$. As such, if we know the structure of $\g$ we can parametrize any tangent space of $G$.

For compact Lie groups, the Lie exponential and the Riemannian exponential agree\footnote{Here we are assuming that we consider the group $G$ together with a bi-invariant metric. For compact matrix Lie groups this metric is exactly $\scalar{A_1, A_2}_B = \tr\pa{\trans{A_1}A_2}$. For more on this, we refer the reader~\citet[][Appendix C.1.]{lezcano2019cheap}.} and take the form
\begin{equation}\label{eq:geodesics_compact}
    \exp_{G, B}(\tilde{A}) = \exp_B(\tilde{A}) = B\exp(B^{-1}\tilde{A}) = B \exp(A) \qquad A \in \g.
\end{equation}
where $\exp$ is the exponential of matrices and we still used the identification $\tilde{A} = BA$.
For these groups, the Riemannian exponential is surjective.

These Lie groups were already presented in~\citet{lezcano2019cheap} in the context of optimization for neural networks.  In that paper, this trivialization was only considered in the static case, namely $\deffun{\exp : \g -> G;}$.

The gradient of~\Cref{eq:geodesics_compact} is given by \Cref{prop:grad_exp}.

Finally, for compact matrix Lie groups, in order to use this formula to implement the dynamic trivialization method, we are just missing the expression for the Lie algebra $\g \subset \M{n}$ of the Lie group in which we are interested. We give a list of some of these below.
\paragraph{Special orthogonal group}
\[
    \SO{n} = \set{B \in \M{n} | \trans{B}B = \I, \det B = 1} \qquad
    \so{n} = \Skew{n} = \set{A \in \M{n} | \trans{A} = -A},
\]
\paragraph{Unitary group}
\[
    \U{n} = \set{B \in \CC^{n \times n} | \conj{B}B = \I} \qquad
    \ualg{n} = \set{A \in \CC^{n \times n} | \conj{A} = -A},
\]
\paragraph{Special unitary group}
\[
    \SU{n} = \set{B \in \CC^{n \times n} | \conj{B}B = \I, \det B = 1} \qquad
    \su{n} = \set{A \in \CC^{n \times n} | \conj{A} = -A, \tr A = 0}.
\]
\paragraph{Complex torus}
\[
    \TT{n, \CC} = \set{B \in \Diag{n, \CC} | \abs{B_{ii}} = 1} \qquad
    \ttalg{n, \CC} = \set{A \in \Diag{n, \CC} | A_{ii} \in i\RR \subset \CC},
\]

\begin{remark}
    We say that $\TT{n, \CC}$ is a torus because it is a product of $n$ circles. This can easily be seen simply defining the circle as $S^1 = \set{z \in \CC | \abs{z} = 1}$, so that $\TT{n, \CC} \iso S^1 \times \dots \times S^1$. In this case, the correspondence between the Lie algebra and the Lie group is given by the Euler formula.
\end{remark}

\paragraph{Real torus}
    The real torus $\TT{2n, \RR}$ consists of the $2n \times 2n$ block-diagonal matrices with blocks of the form
    \[
\begin{pmatrix}
    \cos(\theta) &-\sin(\theta) \\
    \sin(\theta) &\cos(\theta)
\end{pmatrix} \qquad \theta \in [-\pi, \pi].
    \]
    Note that is exactly the matrix representation of the complex number $e^{\theta i}$. In this case, its Lie algebra is given by the block-diagonal matrices with blocks given by
    \[
\begin{pmatrix}
    0 &-a \\
    a &0
\end{pmatrix} \qquad a \in \RR.
    \]

    \begin{remark}
        In the case of the real and complex torus, the exponential is a Riemannian covering map, this meaning that, in particular, it is always a local isometry, and it does not create local minima or saddle points. For this reason, to optimize on these two manifolds, we would not need to use dynamic trivializations, given that a static trivialization would work just fine as a direct corollary of~\Cref{thm:change_metric}.
    \end{remark}

\subsection{The groups \texorpdfstring{$\GLp{n}$}{GL+(n)} and \texorpdfstring{$\SL{n}$}{SL(n)}}
We then look at two more Lie groups on which we can compute the Riemannian exponential. These groups are of primal importance for problems that require invertible matrices or the study of volume-flows, like normalizing flows.

These groups are also an important example of groups on which the Riemannian exponential and the Lie exponential do not agree, and thus, in this case, we have two different trivialization schemes.

Furthermore, the Lie exponential is not surjective on these groups, so these are also examples of a retraction that could not be used as a static trivialization, but it can be used as a dynamic one.

\paragraph{Positive general linear group}
\[
    \GLp{n} = \set{B \in \M{n} | \det B > 0 } \qquad
    \gl{n} = \M{n}.
\]
This is the connected component containing the identity matrix of the general linear group
\[
    \GL{n} = \set{B \in \M{n} | \det B \neq 0 }.
\]

\paragraph{Special linear group}
\[
    \SL{n} = \set{B \in \M{n} | \det B = 1} \qquad
    \slalg{n} = \set{A \in \M{n} | \tr A = 0}.
\]

\begin{remark}
The orthogonal projection from $\RR^{n \times n}$ onto $\slalg{n}$ is given by
\[
    \deffun{\pi_{\slalg{n}} : \RR^{n \times n} -> \slalg{n}; A -> A - \tfrac{1}{n}\tr(A)\I}
\]
We can use this formula to parametrize $\slalg{n}$, in the same way that we use $A \mapsto \frac{1}{2}\pa{A - \trans{A}}$ to parametrize $\so{n} \iso \Skew{n}$.
\end{remark}

On these groups have two different trivializations based on the exponential of matrices.

On the one hand, we still have the dynamic Lie trivialization $\exp_B$ presented in~\Cref{sec:compact}.

On the other hand, if $G$ is $\GL{n}$ or $\SL{n}$ for $n > 2$ equipped with the metric $\scalar{\tilde{A}_1, \tilde{A}_2}_B = \tr\pa{\trans{\pa{B^{-1}\tilde{A}_1}}B^{-1}\tilde{A}_2}$, we have that the Riemannian trivialization for these groups is given by
\footnote{This result applies not only to $\SL{n}$, but to any semisimple Lie group equipped with the left-invariant metric associated to the Killing form.}
\[
    \exp_{G, B}(BA) = B\exp\pa{\trans{A}}\exp\pa{A - \trans{A}} \quad \text{for } A \in \g.
\]
Note that $BA \in T_BG$, as one would expect.

\begin{remark}
This result was first stated in~\citet{wang1969discrete}, and a proof of it can be found in~\citet[][Chapter 6, Exercise A.9]{helgason1979differential}. For the proof for $\GL{n}$, see~\citet[][Theorem $2.14$]{andruchow2014left}.
\end{remark}

We can then compute the gradient of this parametrization as we know how to compute the gradient of the exponential map with respect to this metric, as detailed at the beginning of~\Cref{sec:examples}.

\begin{remark}
    It happens that the Lie exponential is not surjective on $\SL{n}$ so, in this case, it would not be possible to set $K=\infty$ in the dynamic trivialization algorithm, that is, it would be necessary to change the basis of the trivialization. The Lie trivialization is not surjective on $\GLp{n, \RR}$ either, but it is surjective on $\GL{n, \CC}$, with $\gl{n, \CC} \iso \CC^{n \times n}$.

    These are examples for which using dynamic trivializations allow us to use certain parametrizations that we would not be able to use in the context of static trivializations.

    The Riemannian exponential on $\SL{n}$ and $\GL{n, \RR}$ is surjective with this metric.
\end{remark}

\begin{remark}
On these two manifolds, we can also use their polar decomposition as a trivialization to optimize over them, see~\citet[][Proposition $2.19$]{hall2015lie}.
\end{remark}

\subsection{Naturally reductive homogeneous spaces}
In this section we touch on a few of the most used manifolds in optimization, namely the Stiefel manifold, the sphere, the hyperbolic space, and the symmetric positive definite matrices.

In this section we will restrict ourselves to expose the formulae for the exponential on these manifolds for certain metric. Most of these manifolds fall under the theory of symmetric manifolds, or the more general theory of naturally reductive homogeneous spaces. For a derivation of the fomulae in this section in the more general context of naturally reductive homogeneous spaces, we refer the reader to the self-contained exposition in~\citet[][Chapter 22]{gallier2019differential}.

\subsubsection{Stiefel manifold}
The Stiefel manifold is the manifold of $n \times k$ matrices with $k \leq n$ with orthonormal columns. Equivalently, it is the set of orthonormal $k$-frames on $\RR^n$. In symbols we can see the Stiefel manifold as a submanifold of $\M{n, k}$ as follows:
\[
    \St{n, k} \defi \set{B \in \M{n, k} | \trans{B}B = \I_k} \qquad T_B\St{n, k} = \set{\tilde{A} \in \M{n,k} | \trans{B}\tilde{A} \in \so{k}}
\]
Note that $\St{n, n} \iso \Ort{n}$. In this case, compare the formula of the tangent space with that given for $T_B\SO{n}$ Lie groups in~\Cref{sec:compact}, in particular that of $\so{n}$.

If we consider any completion of the frame $B$ into a basis of $\RR^n$, that is, a matrix $B_{\bot} \in \M{n, n-k}$ such that
$
\begingroup
\setlength\arraycolsep{2pt}
\begin{pmatrix}
B & B_{\bot}
\end{pmatrix}
\endgroup
\in \Ort{n}$,
we have the more computationally amenable description of the tangent spaces of $\St{n,k}$
\[
    T_B\St{n, k} = \set{BA + B_\bot A_\bot \in \M{n,k} | A \in \so{k}, A_\bot \in \M{n-k, k}}.
\]
Note that if $n = k$, $T_B\St{n,n} = \set{BA | A \in \so{n}}$ and we still recover the same definition from~\Cref{sec:compact}.

The canonical metric
\footnote{We say that this is the canonical metric because it is the one inherited from the quotient structure---as a homogeneous space---of $\St{n, k}$ as $\St{n,k} \iso \Ort{n} / \Ort{n-k}$. If we put the Euclidean metric $\tr\pa{\trans{X}Y}$ on $\Ort{n}$, this metric is bi-invariant under the action of $\Ort{n-k}$ and descends into the canonical metric on the quotient manifold $\Ort{n} / \Ort{n-k}$ described here. For the exact computations see~\citet{edelman1998geometry}.}
on the Stiefel manifold is given for $B \in \St{n, k}$, $\tilde{A}_1, \tilde{A}_2 \in T_B\St{n, k}$ by
\[
    \scalar{\tilde{A}_1, \tilde{A}_2}_B = \tr\pa{\tilde{A}^{\transaux}_1\pa{\I_n - \tfrac{1}{2}B\trans{B}}\tilde{A}_2}
\]

With the notation as above, consider the QR decomposition $QR = (\I_n - B\trans{B})\tilde{A}$ with $Q \in \St{n, k}$, $R \in \M{k}$, then we have that is the we have that the Riemannian exponential is given
\[
    \exp_{\St{n, k}, B}(\tilde{A}) =
    \begin{pmatrix}
        B & Q
    \end{pmatrix}
    \exp
    \begin{pmatrix}
        A &-R \\
        R &0
    \end{pmatrix}
    \begin{pmatrix}
        I_k \\
        0
    \end{pmatrix}.
\]
\begin{remark}
    The computational cost of computing geodesics on $\St{n, k}$ is then dominated by the computation of a thin-QR factorization of a $n \times k$ matrix and the computation of a exponential of a skew-symmetric $2k \times 2k$ matrix.

    If $2k > n$, a more efficient algorithm is possible. We just have to compute the geodesics on $\SO{n}$ as per~\Cref{sec:compact} and then drop then project the result onto $\St{n, k}$ dropping the last $n-k$ columns. This process requires the computation of just one exponential of an $n \times n$ matrix. This process is equivalent to the formula described above.
\end{remark}

\begin{remark}
In~\citet[][Section 2.2.2]{edelman1998geometry} the authors give a formula for the geodesics of $\St{n, k}$ seen as a submanifold of $\M{n, k}$, that is, with the metric $\scalar{\tilde{A}_1, \tilde{A}_2} = \tr\pa{\tilde{A}^\transaux_1\tilde{A}_2}$. In Section 2.4.1 they also discuss an essential difference between the Euclidean metric and the canonical metric on the Stiefel manifold.
\end{remark}

\subsubsection{The sphere and the hyperbolic plane}
The case of the sphere $S^n = \set{x \in \RR^{n+1} | \norm{x} = 1}$ is probably one of the most classical ones. We will always consider the \emph{round sphere}, this is, the sphere as a subset of $\RR^{n+1}$ together with the metric inherited from $\RR^{n+1}$.

Its tangent space at a point $x \in S^n$ is simply given by the set of vectors orthogonal to it
\[
    T_xS^n = \set{v \in \RR^n | \scalar{x,v} = 0}.
\]
and the geodesics are given by
\[
    \exp_{S^n, x}(v) = \cos(\norm{v})x + \sin(\norm{v})\frac{v}{\norm{v}}.
\]

To describe the $n$-dimensional hyperbolic space, first consider the diagonal matrix $\I_{n, 1}$
with $n$ positive ones and a negative one in its diagonal. We will use the following notation
\[
    \scalar{x, y}_{\Haux} \defi \scalar{x, \I_{n,1}y} = \sum_{i=1}^n x_iy_i - x_{n+1}y_{n+1} \qquad \forall x, y \in \RR^{n+1}
\]
and denote by $\norm{x}_{\Haux} = \sqrt{\scalar{x, x}_{\Haux}}$ whenever $\scalar{x,x}_{\Haux} \geq 0$.

With this notation, the $n$-dimensional hyperbolic space $\Hip{n}$ can be seen as the submanifold of $\RR^{n+1}$ defined by
\[
    \Hip{n} = \set{ x \in \RR^{n+1} | \scalar{x,x}_{\Haux} = -1, x_{n+1} > 0}
\]
with tangent space at $x \in \Hip{n}$ given by
\[
    T_x\Hip{n} = \set{ v \in \RR^{n+1} | \scalar{x, v}_{\Haux} = 0}.
\]

The geodesics are then given by
\[
    \exp_{\Hip{n}, x}(v) = \cosh(\norm{v}_{\Haux})x + \sinh(\norm{v}_{\Haux})\frac{v}{\norm{v}_{\Haux}}.
\]

\begin{remark}
    The formula for the sphere is just a particular case of the one given for $\St{n+1, 1} \iso S^n$.

    The reason why the formulas of the geodesics on the sphere and the hyperbolic plane are so similar has a geometric meaning. This can be seen in a more general case, considering the oriented Grassmannian manifold and the hyperbolic Grassmannian. The sphere and the hyperbolic plane are special cases of these manifolds. These manifolds are \emph{symmetric spaces} and they are dual to each other. For more on the duality between symmetric spaces of compact and non-compact type, we refer the reader to~\citet[][Chapter 5, Example 1]{helgason1979differential} or~\citet[][Chapter 11]{o1983semi}.
\end{remark}

\begin{remark}
    In the same spirit as we can compute the geodesics on $\St{n,k}$ by taking a geodesic in $\SO{n}$ and projecting it down to $\St{n,k}$, we can also compute the geodesics of the real projective plane $\RP{n}$ by computing the geodesic on $S^n$ and projecting it down to $\RP{n}$. The metric induced on $\RP{n}$ is called the \emph{standard round metric} on $\RP{n}$. If we perform the same process between $S^{2n+1}$ and $\CP{n}$ and, in this case, we would get the \emph{Fubini-Study} metric. This construction arises naturally in the context of principal bundles with invariant metrics.
\end{remark}

\subsubsection{The symmetric positive definite matrices}\label{sec:pos_def}
The symmetric positive definite matrices $\Symp{n}$ do not form a Lie group, as they are not closed under matrix multiplication, but they are a symmetric space.

When seen as a subset of $\M{n}$, we can endow it with a left-invariant metric defined as $\scalar{\tilde{A}_1, \tilde{A}_2}_B = \tr\pa{B^{-1}A_1B^{-1}A_2}$. The tangent space at a point $B \in \Symp{n}$ is given by
\[
    T_B\Symp{n} = \set{ B^{1/2}AB^{1/2} | A \in \sym{n}}
\]
where $\sym{n}$ is the tangent space at the identity, given by the symmetric matrices
\[
    \sym{n} = \set{A \in \M{n} | \trans{A} = A}.
\]
Note that for a symmetric positive definite matrix the square root is well defined, as symmetric positive definite matrices are diagonalizable, and the square root is just the matrix whose eigenvalues are the (positive) square root of the eigenvalues of the initial matrix.

Following the notation for Lie groups, if we denote $\tilde{A} = B^{1/2}AB^{1/2}$, we have that
\[
    \exp_{\Symp{n}, B}(\tilde{A}) = B^{1/2}\exp\pa{B^{-1/2} \tilde{A} B^{-1/2}}B^{1/2} = B^{1/2}\exp\pa{A}B^{1/2}.
\]

\begin{remark}
    In this case, this manifold also constitutes an example of a symmetric space since $\Symp{n} \iso \GLp{n} / \Ort{n}$. The metric considered here is the natural one with respect to this structure. An introduction to the computational aspects of this manifold can be found in~\citet{bonnabel2009riemannian}.
\end{remark}

\subsection{Some retractions}
For now we have just mentioned examples regarding either the Lie exponential or the Riemannian exponential, but the dynamic trivialization framework allows us to use any function that is a retraction. In order to make use of arbitrary retractions, we just have to be able to compute the gradient of the function when precomposed with them. We will do so for a few important examples in this section.

In the case of the two retractions mentioned in~\Cref{sec:retractions}, the Cayley map and projectors, their derivatives are already implemented in the major deep-learning packages, like Pytorch or Tensorflow. The first one just requires an inverse (or, more efficiently and stable, the solution of a system of the form $AX = B$) and the second one just requires the derivatives with respect to the SVD decomposition.

The retraction induced by a projector can be easily implemented for most manifolds. For example, for the sphere takes just the form $x \mapsto \frac{x}{\norm{x}}$, whose derivative can also be computed just using autodiff.

For the symmetric positive definite matrices, we have the retraction from the symmetric matrices into the positive semidefinite matrices given by $A \mapsto A^2$. This one is similar to the frequently used from the upper triangular matrices given by the Cholesky decomposition $L \mapsto L\trans{L}$. The former has the advantage that we have access to $A$ which is the square root of its image. This can be helpful, as sometimes the square root of the matrix is needed for some computations, as we have seen in~\Cref{sec:pos_def}. The retraction given by the Cholesky decomposition has the advantage that, if the diagonal of the upper-triangular matrix $L$ is strictly positive, then $L\trans{L}$ will be positive definite. For this reason this retraction is often used to parametrize variance kernels in Bayesian statistics.

Another retraction for $\Symp{n}$ is given by the exponential of matrices $\deffun{\exp : \sym{n} -> \Symp{n};}$ which is a diffeomorphism. As such, it provides a rather good, although expensive, option to parametrize this manifold.

For a much more in-depth treatment of retractions, we refer the reader to~\citet{absil2009optimization}.

\section{Detailed Experiment Set-Up and Hyperparameters}\label{sec:hyperparam}
We tried to reproduce as faithfully as possible the set-up from previous experiments, to achieve a fair comparison. The batch size for all the experiments is $128$. We fixed the seed to be $5544$ of both Numpy and Pytorch for reproducibility in the final runs.

The exact architecture to process for a sequence of inputs $x_t \in \RR^d$ with a hidden size $p$ is given by the formula
\[
    h_{t+1} = \sigma\pa{\exp(A) h_t + Tx_{t+1}}
\]
with $A \in \Skew{p}$ and $T \in \M{p,d}$. $\sigma$ is the \code{modrelu} non-linearity introduced in~\citet{arjovsky2016unitary}.

The initialization \emph{Henaff} refers to initializing the diagonal blocks of the skew-symmetric matrices with elements sampled from the uniform distribution $\Unif\pa{-\pi, \pi}$ as detailed in~\citet{henaff2016recurrent}. The \emph{Cayley} initialization refers to sampling the diagonal from a distribution $u \sim \Unif\pa{0, \pi/2}$ and then computing $s = -\sqrt{\frac{1-\cos(u)}{1 + \cos(u)}}$ as detailed in~\citet{helfrich18a}.

As we mentioned in the experiments section, we did not include the copying experiment that was usually used in previous papers, given that, as it was demonstrated in~\citet{lezcano2019cheap}, the exponential trivialization converges to the exact solution even when based at the identity. The same happens when used with the dynamic trivialization, so we do not think that this experiment adds anything to the results.

\begin{table}[!ht]
\begin{adjustwidth}{-8.6pt}{-8.6pt}
    \centering
    \caption{Hyperparameters for \dtriv{}$1$.}
    \label{tab:hyperparam_1}
    \begin{tabular}{ll|cccc}
        \toprule
        Dataset & Size & Optimizer & Learning Rate & Orthogonal optimizer & Orthogonal Learning Rate \\
        \midrule
        \midrule
        \multirow{3}{*}{\mnist} &
        $170$ &
        \multirow{6}{*}{\rmsprop} &
        $10^{-3}$ &
        \multirow{6}{*}{\rmsprop} &
        $10^{-4}$ \\
        & $360$ & & $10^{-3}$ & & $10^{-4}$ \\
        & $512$ & & $5\cdot 10^{-4}$ & & $7\cdot 10^{-5}$ \\
        \multirow{3}{*}{\pmnist} &
        $170$ &
        &
        $7 \cdot 10^{-4}$ &
        &
        $2 \cdot 10^{-4}$ \\
        & $360$ & & $7\cdot 10^{-4}$ & & $7\cdot 10^{-5}$\\
        & $512$ & & $5 \cdot 10^{-4}$ & & $5 \cdot 10^{-5}$ \\
        \midrule
        \multirow{3}{*}{\timit} &
        $224$ &
        \multirow{3}{*}{\adam} &
        $10^{-3}$ &
        \multirow{3}{*}{\rmsprop} &
        $10^{-4}$ \\
        & $322$ & & $10^{-3}$ & & $10^{-4}$ \\
        & $425$ & & $10^{-3}$ & & $10^{-4}$ \\
        \bottomrule
    \end{tabular}
\end{adjustwidth}
\end{table}

\begin{table}[!ht]
\begin{adjustwidth}{-8.6pt}{-8.6pt}
    \centering
    \caption{Hyperparameters for \dtriv{}$100$.}
    \label{tab:hyperparam_100}
    \begin{tabular}{ll|cccc}
        \toprule
        Dataset & Size & Optimizer & Learning Rate & Orthogonal optimizer & Orthogonal Learning Rate \\
        \midrule
        \midrule
        \multirow{3}{*}{\mnist} &
        $170$ &
        \multirow{6}{*}{\rmsprop} &
        $5 \cdot 10^{-4}$ &
        \multirow{6}{*}{\rmsprop} &
        $10^{-4}$ \\
        & $360$ & & $3\cdot 10^{-4}$ & & $5\cdot 10^{-5}$ \\
        & $512$ & & $5 \cdot 10^{-4}$ & & $10^{-4}$ \\
        \multirow{3}{*}{\pmnist} &
        $170$ &
        &
        $7 \cdot 10^{-4}$ &
        &
        $10^{-4}$ \\
        & $360$ & & $5 \cdot 10^{-4}$ & & $7 \cdot 10^{-5}$\\
        & $512$ & & $5 \cdot 10^{-4}$ & & $5 \cdot 10^{-5}$ \\
        \midrule
        \multirow{3}{*}{\timit} &
        $224$ &
        \multirow{3}{*}{\adam} &
        $10^{-3}$ &
        \multirow{3}{*}{\rmsprop} &
        $2 \cdot 10^{-4}$ \\
        & $322$ & & $10^{-3}$ & & $2 \cdot 10^{-4}$ \\
        & $425$ & & $10^{-3}$ & & $10^{-4}$ \\
        \bottomrule
    \end{tabular}
\end{adjustwidth}
\end{table}

\begin{table}[!ht]
\begin{adjustwidth}{-8.6pt}{-8.6pt}
    \centering
    \caption{Hyperparameters for \dtriv{}$\infty$.}
    \label{tab:hyperparam_infty}
    \begin{tabular}{ll|cccc}
        \toprule
        Dataset & Size & Optimizer & Learning Rate & Orthogonal optimizer & Orthogonal Learning Rate \\
        \midrule
        \midrule
        \multirow{3}{*}{\mnist} &
        $170$ &
        \multirow{6}{*}{\rmsprop} &
        $7\cdot 10^{-4}$ &
        \multirow{6}{*}{\rmsprop} &
        $10^{-4}$ \\
        & $360$ & & $5 \cdot 10^{-4}$ & & $10^{-4}$ \\
        & $512$ & & $10^{-4}$ & & $7 \cdot 10^{-5}$ \\
        \multirow{3}{*}{\pmnist} &
        $170$ &
        &
        $7\cdot 10^{-4}$ &
        &
        $2 \cdot 10^{-4}$ \\
        & $360$ & & $7 \cdot 10^{-4}$ & & $5 \cdot 10^{-5}$\\
        & $512$ & & $3 \cdot 10^{-4}$ & & $7 \cdot 10^{-5}$ \\
        \midrule
        \multirow{3}{*}{\timit} &
        $224$ &
        \multirow{3}{*}{\adam} &
        $10^{-3}$ &
        \multirow{3}{*}{\rmsprop} &
        $2 \cdot 10^{-4}$ \\
        & $322$ & & $10^{-3}$ & & $2 \cdot 10^{-4}$ \\
        & $425$ & & $10^{-3}$ & & $2 \cdot 10^{-4}$ \\
        \bottomrule
    \end{tabular}
\end{adjustwidth}
\end{table}

\end{document}